\documentclass{article}

\usepackage{microtype}
\usepackage{graphicx}
\usepackage{subcaption}
\usepackage{booktabs} % for professional tables

\usepackage{hyperref}
\usepackage{adjustbox}
\usepackage{paralist, tabularx}
\usepackage{multirow}
\usepackage[]{mdframed}
\usepackage[hang,flushmargin]{footmisc}

\usepackage[accepted]{icml2026}

\usepackage{amsmath}
\usepackage{amssymb}
\usepackage{mathtools}
\usepackage{amsthm}
\usepackage{xurl}

\usepackage{xcolor}
\definecolor{darkblue}{rgb}{0.0,0.0,0.7}
\definecolor{darkred}{rgb}{0.4,0,0.3}
\hypersetup{
    colorlinks=true,
    linkcolor=darkred,
    citecolor=darkblue,
    filecolor=darkblue,
    urlcolor=darkblue
}

\usepackage[capitalize,nameinlink,noabbrev]{cleveref}

\theoremstyle{plain}
\newtheorem{theorem}{Theorem}
\newtheorem{proposition}{Proposition}
\newtheorem{lemma}{Lemma}

\theoremstyle{definition}

\newtheorem{assumption}{Assumption}
\theoremstyle{remark}

\usepackage{amsmath,amsfonts,bm}

\newcommand{\norm}[1]{\left\lVert#1\right\rVert}

\def\eqref#1{equation~\ref{#1}}
\def\1{\bm{1}}

\def\vtheta{{\bm{\theta}}}

\def\vxi{{\bm{\xi}}}

\def\vb{{\bm{b}}}

\def\ve{{\bm{e}}}

\def\vg{{\bm{g}}}
\def\vh{{\bm{h}}}

\def\vv{{\bm{v}}}

\def\vx{{\bm{x}}}
\def\vy{{\bm{y}}}

\def\mA{{\bm{A}}}

\def\mF{{\bm{F}}}

\def\mH{{\bm{H}}}
\def\mI{{\bm{I}}}

\def\mP{{\bm{P}}}

\def\mS{{\bm{S}}}

\def\mU{{\bm{U}}}
\def\mV{{\bm{V}}}

\def\mSigma{{\bm{\Sigma}}}

\DeclareMathAlphabet{\mathsfit}{\encodingdefault}{\sfdefault}{m}{sl}
\SetMathAlphabet{\mathsfit}{bold}{\encodingdefault}{\sfdefault}{bx}{n}

\def\gA{{\mathcal{A}}}

\def\gD{{\mathcal{D}}}

\def\gL{{\mathcal{L}}}

\def\gN{{\mathcal{N}}}
\def\gO{{\mathcal{O}}}

\def\gS{{\mathcal{S}}}

\newcommand{\E}{\mathbb{E}}

\newcommand{\R}{\mathbb{R}}

\newcommand{\red}[1]{\textcolor{red}{#1}}
\newcommand{\blue}[1]{\textcolor{blue}{#1}}
\newcommand{\green}[1]{\textcolor{green}{#1}}

\newcommand{\gray}[1]{\textcolor{gray}{#1}}

\newcommand{\info}[1]{\colorbox{green!30}{#1}}

\usepackage[textsize=tiny]{todonotes}

\icmltitlerunning{Randomized Advantage Transformation (RAT)}

\begin{document}

\twocolumn[
  \icmltitle{Randomized Advantage Transformation (RAT): \\ Computing Natural Policy Gradients via Direct Backpropagation}

  % It is OKAY to include author information, even for blind submissions: the
  % style file will automatically remove it for you unless you've provided
  % the [accepted] option to the icml2026 package.

  % List of affiliations: The first argument should be a (short) identifier you
  % will use later to specify author affiliations Academic affiliations
  % should list Department, University, City, Region, Country Industry
  % affiliations should list Company, City, Region, Country

  % You can specify symbols, otherwise they are numbered in order. Ideally, you
  % should not use this facility. Affiliations will be numbered in order of
  % appearance and this is the preferred way.
  \icmlsetsymbol{equal}{*}
  
  \begin{icmlauthorlist}
    \icmlauthor{Mingfei Sun}{yyy}
  \end{icmlauthorlist}

  \icmlaffiliation{yyy}{The University of Manchester, United Kingdom. \href{https://github.com/agent-lab/ICML2026-RAT}{Code URL}}
  \icmlcorrespondingauthor{}{mingfei.sun@manchester.ac.uk}

  % You may provide any keywords that you find helpful for describing your
  % paper; these are used to populate the "keywords" metadata in the PDF but
  % will not be shown in the document
  \icmlkeywords{Reinforcement Learning, Natural Policy Gradient, Woodbury Formula, Kaczmarz Method}

  \vskip 0.3in
]

% this must go after the closing bracket ] following \twocolumn[ ...

% This command actually creates the footnote in the first column listing the
% affiliations and the copyright notice. The command takes one argument, which
% is text to display at the start of the footnote. The \icmlEqualContribution
% command is standard text for equal contribution. Remove it (just {}) if you
% do not need this facility.

% Use ONE of the following lines. DO NOT remove the command.
% If you have no special notice, KEEP empty braces:
\printAffiliationsAndNotice{}  % no special notice (required even if empty)
% Or, if applicable, use the standard equal contribution text:
% \printAffiliationsAndNotice{\icmlEqualContribution}

\begin{abstract}
% We present Randomized Advantage Transformation (RAT), 
% a novel method for estimating natural policy gradients in deep reinforcement learning.
% RAT leverages Woodbury formula to reduce the computation complexity of inverting the Fisher matrix,
% and employs a randomized block scheme to iteratively transform the advantage function,
% enabling the estimation of natural policy gradients through a single loss backpropagation.
% We provide theoretical analysis demonstrating the convergence of RAT to the true natural policy gradients.
% We conduct extensive experiments on benchmark reinforcement learning tasks,
% showcasing that RAT achieves comparable or superior performance to existing natural policy gradient methods (e.g., KFAC, Fisher vector product with conjugate gradient),
% while significantly reducing computational overhead and simplifying implementation.
% \red{explicitly mentioning randomized kaczmarz}

Natural policy gradients improve optimization by accounting for the geometry of distribution space, 
but their practical use is limited by the cost of estimating and inverting the Fisher matrix. 
We present Randomized Advantage Transformation (RAT), 
a method for estimating Tikhonov-regularized natural policy gradients via direct backpropagation. 
By applying the Woodbury formula, 
we reformulate the regularized natural policy gradients as vanilla policy gradients with a transformed advantage. 
RAT computes this transformation efficiently via randomized block Kaczmarz iterations on on-policy mini-batches, 
avoiding explicit Fisher construction, conjugate-gradient solvers, and architecture-specific approximations. 
We provide convergence guarantees for RAT and demonstrate empirically that it matches or exceeds established natural-gradient methods across continuous and visual control benchmarks, 
while remaining simple to implement and compatible with various architectures.
\end{abstract}

\section{Introduction}
% natural policy gradients play a big role in modern deep RL; benefits over policy gradients
Natural policy gradients are a foundational tool in deep Reinforcement Learning (RL), 
offering parameterization-invariant update directions~\citep{bagnell2003covariant} by pre-conditioning policy gradients with the inverse Fisher matrix~\citep{amari1998natural,kakade2001natural}. 
This geometric correction has been shown to significantly improve convergence properties~\citep{agarwal2021theory},
and underlies several influential RL algorithms, 
including Natural Actor-Critic (NAC)~\citep{peters2008natural}, Trust Region Policy Optimization (TRPO)~\citep{schulman2015trust}, 
ACKTR~\citep{wu2017scalable},
and connections to PPO-style updates~\citep{schulman2017proximal,hilton2022batch}.
% The use of such pre-conditioning has significant impact on the convergence and the performance as shown in many follow-up studies, including the faster convergence proved by~\citet{agarwal2021theory} and the monotonic improvement guarantees shown in~\citet{schulman2015trust}. 
% The importance of natural policy gradients also goes beyong to the multi-agent case\citep{}, where the global optimality can be guaranteed when natural policy gradients are applied to optimize policies. 
% It further inspires the development of scalable deep reinforcement learning methods, including Acktr~\citep{wu2017scalable} and the widely-used Proximal Policy Optimizations~\citep{schulman2017proximal}, which has been suggested to have a direct link to Natural policy gradients in ~\citet{hilton2022batch}

% estimating natural policy gradients is often deemed as challenging, compared to the estimation of vanilla policy gradients
% , owing to the high dimensionality of the Fisher matrix.
Despite their advantages, natural policy gradients are rarely used directly in large-scale deep reinforcement learning due to computational constraints. 
The Fisher matrix scales with the number of policy parameters, 
making explicit construction or inversion infeasible. 
To address this, prior work has largely followed two directions. 
Hessian-free approaches compute Fisher-vector products and solve the resulting linear systems using conjugate gradient methods, 
as in TRPO~\citep{schulman2015trust}. 
While effective, these methods introduce substantial computational overhead, 
require careful tuning of inner-loop solvers, and are difficult to apply in shared actor-critic architectures~\citep{schulman2017proximal,wu2017scalable}. 
Alternatively, structured approximations such as KFAC~\citep{martens2015optimizing} exploit layer-wise factorizations of the Fisher matrix, 
trading accuracy for efficiency but relying on architecture-dependent assumptions~\citep{benzing2022gradient} and nontrivial implementation optimizations~\citep{george2018fast}.

% However, estimating natural policy gradients is deemed as much more challenging than estimating the vanilla policy gradients. 
% In many practical implementations,
% the vanilla policy gradients can be directly estimated via a single loss back-propogation of a surrogate objective function, 
% which can be easily defined as a product of policy ratios~\citep{schulman2017proximal} (or policy logarithms~\citep{mnih2016asynchronous}) and advantages (i.e., advantage actor-critic). 
% In contrast, the estimation of natural policy gradients often inevitably involves the Fisher matrix, either explicitly or implicitly. 
% The dimensionality of the fisher matrix is given by the number of parameters, which is often very large and makes both the storage and the matrix inverting challenging for practical tasks. 
% Prior arts attempt to address this by resorting to Fisher vector product accompanied by conjugate gradient methods, circumventing the explicit computation and inverting of Fisher matrix (methods like TRPO~\citep{schulman2015trust}), 
% or rely on layer-wise structured decomposition and inverse approximation (methods like KFAC\citep{martens2015optimizing}). 
% The former methods involves complicated CG iterations and limits the application to not shared actor-critic architectures whereas the latter requires the delicate design of architecture-aware approximations for different types of neural networks. 

\begin{figure*}
    \centering
    \includegraphics[width=0.9\linewidth]{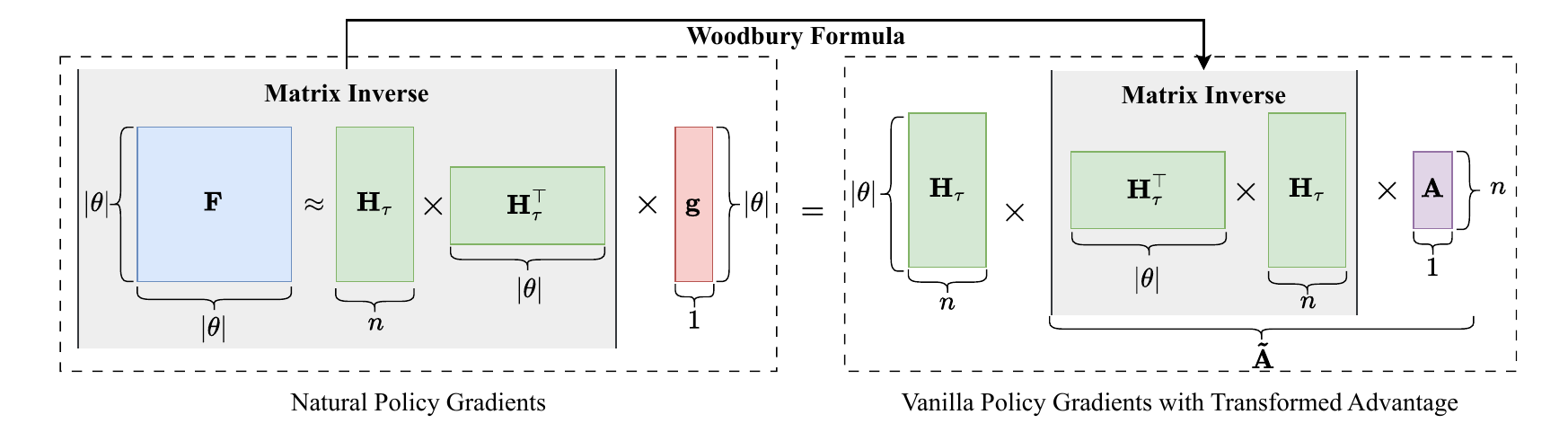}
    \caption{Fisher matrix $\mF\in\R^{|\vtheta|\times|\vtheta|}$ estimated from samples $\tau$ (shown in green) is often ill-conditioned and hard to invert directly.
        Randomized Advantage Transformation (RAT) leverages Woodbury formula to replace the inversion of $\mF$ to that of a sampled preconditioner of size $n\times n$ (shown in grey). 
        The resulting inverse is absorbed into a randomized, block-wise transformation of the advantage function, yielding a surrogate objective whose first-order gradient via direct backpropagation approximates natural policy gradient.}
    \label{fig:rat-overview}
\end{figure*}

% we show that natural policy gradients
In this work, we show that estimating natural policy gradients can be reduced to a simpler and more general procedure. 
Our starting point is the observation that Tikhonov-regularized natural policy gradients admit an equivalent least-squares formulation. 
By applying the Woodbury formula, we derive a representation in which the inverse Fisher matrix is absorbed into a transformation of the advantage function. 
Under this reformulation, the natural policy gradient takes the same form as a vanilla policy gradient, 
but with a modified advantage, see~\cref{fig:rat-overview} for an overview.
% In this paper, we show that the estimation of natural policy gradients can be significantly simplified.
% By leveraging the Woodbury formula, we demonstrate that the form of natural policy gradients can be effectively reduced to the form of vanilla policy gradients and surprisingly the natural policy gradients can also be estimated directly from a single loss backpropogation of a surrogate objective. 
% The recipe of this reduction comes from a novel transformation of advantage functions, Randomized Advantage Transformation (RAT). 
% The key idea underlying RAT comes from the view that the natural policy gradients, 
% if Tikhonov-regularized, 
% corresponds to a regularized least square problems and can thus be effectively solved via Random Block Kaczmarz methods\citep{needell2014paved}, 
% without even computing Fisher Matrix and its inverse. 
% We theoretically show that the RAT converges to the natural policy gradients. 
% We present strong empirical results to corroborate our theoretical analysis. 

Building on this insight, we introduce Randomized Advantage Transformation (RAT), an algorithm that approximates the transformed advantage using randomized block Kaczmarz iterations~\citep{needell2014paved}. 
RAT operates on small subsets of on-policy samples and iteratively refines an estimate of the regularized natural policy gradient. 
Each iteration requires only standard backpropagation through a surrogate loss, 
eliminating the need for explicit Fisher construction, Fisher-vector products, or architecture-specific curvature approximations.

We provide a convergence analysis showing that RAT converges linearly to the regularized natural policy gradient under standard assumptions on the function approximation and state-action coverage. 
We evaluate RAT on a range of benchmark reinforcement learning tasks, 
including continuous control in MuJoCo~\citep{brockman2016openai} and high-dimensional visual control in Procgen~\citep{cobbe2020leveraging}. 
Across these settings, RAT matches or outperforms established natural policy gradient methods while offering a simpler implementation and broader applicability, including support for shared actor-critic architectures.
Our contributions are as follows:
\begin{compactitem}
    \item We derive a Woodbury-based reformulation of Tikhonov-regularized natural policy gradients as vanilla policy gradients with transformed advantages.
    \item We propose Randomized Advantage Transformation (RAT), an efficient algorithm based on randomized block Kaczmarz iterations that estimates NPG using standard backpropagation.
    \item We provide theoretical convergence guarantees and empirical evidence demonstrating the effectiveness of RAT on diverse benchmarks.
\end{compactitem}

\section{Related work}
We review prior work on estimating natural policy gradients and on using the Woodbury formula to reduce the cost of Fisher inversion.

\paragraph{Estimating natural policy gradients.}
The central computational challenge in natural policy gradient methods is to \emph{estimate or apply the inverse Fisher matrix} efficiently and robustly in the presence of sampling noise and function approximation. 
A widely adopted strategy avoids forming the Fisher explicitly and instead computes \emph{Fisher Vector Products} (FVP), also known as Hessian-free methods~\citep{martens2010deep,pascanu2013revisiting}.
The resulting linear systems are typically solved using iterative methods such as conjugate gradient (CG) algorithm. 
This approach underlies trust-region style reinforcement learning algorithms~\citep{schulman2015trust} 
and many modern natural policy gradient implementations~\citep{kuba2022trust,10.5555/3545946.3598613}. 
In practice, however, achieving sufficient accuracy often requires many CG iterations at each training iteration, 
leading to substantial computational overhead.
To address this issue, alternative approaches rely on \emph{structured approximations} of the Fisher matrix, 
including diagonal~\citep{liu2024sophia}, layer-wise block-diagonal~\citep{martens2015optimizing,george2018fast}, or low-rank forms~\citep{dangel2023vivit,DBLP:journals/jscic/YangXWCX22}.
Among these, Kronecker-Factored Approximate Curvature (KFAC) has been widely used in reinforcement learning~\citep{wu2017scalable,bae2022amortized}. 
KFAC approximates each layer's Fisher matrix as a Kronecker product of two smaller matrices, 
relying on independence assumptions about the statistics of the gradients~\citep{ba2017distributed} or their eigen-structure~\citep{george2018fast}. 
While effective,
KFAC depends strongly on the gradient structure of the neural networks.
In contrast, our method RAT is oblivious to the particular model architecture and estimates natural policy gradients through standard backpropagation. 
Another influential line of work estimates natural policy gradients via \emph{natural actor-critic} (NAC) formulations~\citep{peters2008natural,cayci2025recurrent}. 
In these methods, the advantage function is represented in a \emph{compatible} form~\citep{sutton1999policy}, 
ensuring that the natural policy gradient emerges exactly as the solution to a least-squares regression problem~\citep{kakade2001natural,schulman2017equivalence}. 
Our work is related in spirit, but differs in how the least-squares problem is constructed and solved.

\paragraph{Woodbury for inverting Fisher.}
The Woodbury formula provides an efficient way to compute matrix inverses and solve linear systems involving low-rank updates. 
Its use in natural gradients dates back to~\citet{amari2000adaptive}, 
where the Fisher inverse can be stored explicitly and is estimated directly using the Sherman-Morrison lemma, a special case of the Woodbury formula. 
More recently, this idea has been extended to rank-1 approximations of natural policy gradients with neural parameterization~\citep{huo2026rank1approximationinversefisher}. 
Woodbury formula has also been used to reduce the computational cost of natural gradient optimization steps. 
For example, \citet{chen2023efficient} apply Woodbury-based updates to accelerate natural gradient methods, 
and related work develops a momentum scheme to further improve convergence, e.g., SRPING~\citep{goldshlager2024kaczmarz}. 
Only very recently has the full Woodbury formula been used to reformulate the \emph{Tikhonov-regularized natural gradients}. 
For instance, \citet{wu2024improved} employs the push-through identity to analyze per-sample loss reduction under natural gradient updates, 
while \citet{guzman2025improving} uses Woodbury-based transformations to reduce the cost of Fisher inverse, 
for which convergence guarantees have been provided in~\citet{goldshlager2025fast}. 
Our approach builds on these insights but differs in how the Woodbury reformulation is exploited. 
Instead of directly approximating the inverse Fisher, we use the Woodbury identity to transform the advantage function and then estimate the resulting natural policy gradient iteratively through standard backpropagation. 

\paragraph{Position of RAT.} 
RAT differs from prior natural gradient methods in that it neither relies on conjugate-gradient solvers nor on structured Fisher approximations. 
Instead, it leverages a Woodbury-based reformulation to shift curvature information into an advantage transformation, 
which is approximated via randomized linear solvers. 
This perspective allows RAT to remain architecture-agnostic, computationally efficient, and compatible with various architectures.

\section{Preliminaries}

\paragraph{Reinforcement learning formulation.} 
We consider a standard reinforcement learning setup 
in which an agent interacts with an environment over discrete timesteps. 
At each timestep $t$, the agent observes a state $s_t\in\mathcal{S}$, 
selects an action $a_t\in\mathcal{A}$ and receives a scalar reward $r_t\in\R$. 
The agent's behavior is defined by a stochastic policy $\pi$, 
which maps states to action distributions $\pi: \gS\mapsto\Delta_{|\gA|}$ (where $\Delta_d \coloneqq \{\vx\in\R^n_+: \sum_{i}^n x_i =1 \}$ denotes the probability simplex).
The environment is modelled as a Markov Decision Process (MDP) $\{\gS, \gA, P, r, p_0\}$, 
where $P(\cdot|s,a)$ is the transition kernel, 
$r(s, a)$ is the reward function, 
and $p_0$ is the initial state distribution. 
The (discounted) return from time step $t$ is defined as $R_t \coloneqq \sum_{i=t}^{T}\gamma^{i-t}r(s_i, a_i)$, 
where $\gamma\in[0, 1)$ is the discount factor. 
% Note that the return depends on the actions chosen, thereby on the policy $\pi$, and can be stochastic. 
The objective is to learn a policy that maximizes the expected return from the initial state distribution. 
The action-value function of a policy $\pi$ is defined as $Q_{\pi}(s_t, a_t) \coloneqq \E\left[ R_t \vert s_t, a_t \right]$. 
% where the expectation is taken over future states and actions induced by $P$ and $\pi$. 
The value function $V_\pi$ and advantage function $A_\pi(s, a)$ are given by: 
$V_{\pi}(s_t) \coloneqq \E_{a_t\sim\pi(\cdot|s_t)}\Big[Q_{\pi}(s_t, a_t) \Big]$,
and $A_{\pi}(s, a) \coloneqq Q_{\pi}(s, a) - V_{\pi}(s)$. 
We define the discounted state distribution as 
$d_{\pi}(s) \coloneqq (1-\gamma) \sum_{t=0}^{\infty} \gamma^t P(s_t=s|\pi, p_0)$, 
where the factor $(1-\gamma)$ ensures normalization. 
With slight abuse of notation, 
we use $d_{\pi}(s, a)$ to mean $s\sim d_{\pi}(s)$, $a\sim\pi(\cdot|s)$. 
% The performance of a policy $\pi(s, a)$ is defined as:
% $J(\pi) \coloneqq \E_{s_0\sim p_0}\left[V_{\pi}(s_0)\right]$.

\paragraph{Vanilla and Natural Policy Gradients.}
We consider a parameterized policy $\pi(a|s; \vtheta)$. 
Let $\pi_{k}$ denote the policy induced by parameters $\vtheta_k$, 
and $d_{k}$ denote the corresponding discounted state-action distribution. 
The vanilla policy gradient is given by~\citep{sutton1999policy}:
\begin{equation*}
\nabla_{\vtheta}^{\text{PG}}J(\vtheta) \coloneqq \mathbb{E}_{(s, a)\sim d_{\pi}}\left[\frac{\partial}{\partial\vtheta} \log\pi(a|s; \vtheta)A_{\pi}(s, a) \right]. 
\end{equation*}
When the parameter space has a non-Euclidean geometry, 
the vanilla gradient does not correspond to the direction of steepest ascent.
To address this, \citet{amari1998natural} proposed the natural gradient, which accounts for the information geometry of the parameter manifold. 
In reinforcement learning, \citet{kakade2001natural} introduced the natural policy gradient based on the Fisher matrix
\begin{equation*}
\mF(\vtheta)\coloneqq \E_{d_\pi(s, a)}\left[ \frac{\partial}{\partial\vtheta}\log\pi(a|s; \vtheta)\frac{\partial}{\partial\vtheta^\top}\log\pi(a|s; \vtheta) \right]. 
\end{equation*}
The resulting natural policy gradient is defined as
\begin{equation*}
\nabla_{\vtheta}^{\text{NPG}}J(\vtheta)\coloneqq \mF(\vtheta)^{-1}\nabla_{\vtheta}^{\text{PG}}J(\vtheta). 
\end{equation*}
We focus on Kakade's formulation of the natural policy gradient, 
which has been shown to be invariant to reparameterization~\citep{bagnell2003covariant}.
Following~\citet{kunstner2019limitations}, 
we refer to the Fisher matrix estimated from finite samples as the \emph{empirical Fisher},
and call the resulting gradients the \emph{Empirical Natural Policy Gradients}. 

\paragraph{Randomized block Kaczmarz method.}
The randomized block Kaczmarz method is an iterative algorithm for solving overdetermined least-squares problems of the form:
\begin{equation}
\min_{\vx}\norm{\mA\vx - \vb}_2^2, 
\end{equation}
where $\mA\in\R^{n\times d}$ and $\vb\in\R^{n}$. 
Starting from an initial estimate $\vx_0$, 
the method iteratively refines the solution by projecting onto the solution space of randomly selected row blocks of $\mA$.
Specifically, let $T=\{\tau_1, \tau_2, \dots, \tau_{m} \}$ be a partition of the rows of $\mA$. 
At iteration $j$, a block $\tau_j\in T$ is sampled (typically uniformly at random),
and the update is:
\begin{equation*}
\vx_{j} = \vx_{j-1} + \mA_{\tau_j}^\top(\mA_{\tau_j}\mA_{\tau_j}^\top)^{-1}\left(\vb_{\tau} - \mA_{\tau_j}\vx_{j-1}\right). 
\end{equation*}
Under standard assumptions, this randomized block scheme converges to the least-squares solution, 
often with favorable convergence properties compared to deterministic variants~\citep{needell2014paved}.

\section{Randomized Advantage Transformation}
In this section, we first show that Tikhonov-regularized natural policy gradients (NPG) can be formulated as a regularized least-squares problem. 
We then apply the Woodbury formula to express the resulting NPG update in terms of a transformed advantage, 
reducing it to a vanilla policy gradient form. 
Finally, we introduce a randomized Kaczmarz iteration to efficiently solve the corresponding least squares.

\subsection{Tikhonov Regularized NPG}
As shown by~\citet{kakade2001natural}, 
the natural policy gradient can be obtained as the solution to a least squares problem. 
We follow the notation of~\citet{schulman2017equivalence}.
Use $n$ to denote the cardinality of state-action space, i.e., $n=|\mathcal{S}||\mathcal{A}|$,
and $p$ to denote the number of policy parameters, i.e., $p=|\vtheta|$.
Let $\mSigma\in\mathbb{R}^{n \times n}$  
be a diagonal matrix with diagonal entries $d_{\pi}(s)\pi(a|s)$, 
$\mH\in\mathbb{R}^{n \times p}$ be the matrix whose rows are $\frac{\partial}{\partial\vtheta^\top}\log\pi(a|s; \vtheta)$, 
$\vy\in\mathbb{R}^{n}$ be the vector with entries $A_\pi(s, a)$. 
% Note that $\mH$ is typically tall; otherwise, function approximation would be unnecessary. 
Under this notation, the vanilla PG and NPG can be written as: 
\begin{align}
\text{PG: } & \nabla_{\vtheta}^{\text{PG}}J(\theta) \coloneqq \mH^\top \mSigma \vy, \\
\text{NPG: } & \nabla_{\vtheta}^{\text{NPG}}J(\theta) \coloneqq (\mH^\top \mSigma \mH)^{-1} \mH^\top \mSigma \vy.
\end{align}
The NPG update coincides with the solution to the following least squares. 
\begin{proposition}[\citet{kakade2001natural}]\label{theo:least-squares}
The minimizer of $\min_{\vx} \norm{\vy - \mH\vx}^2_{\mSigma}$ is given by
$x^*=(\mH^\top \mSigma \mH)^{-1} \mH^\top \mSigma \vy$, 
provided the inverse exists. 
% where we use the notation $\norm{\vx}_{\mSigma}^2$ to mean $\vx^\top \mSigma \vx$ for a matrix $\mSigma$ and a vector $\vx$. 
\end{proposition}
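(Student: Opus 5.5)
The plan is to treat this as a standard weighted least-squares problem: expand the objective, take its gradient, and use convexity to certify that the stationary point is the global minimizer. Throughout, $\norm{\vv}^2_{\mSigma}$ means $\vv^\top\mSigma\vv$. We use that $\mSigma$ is diagonal with entries $d_\pi(s)\pi(a|s)\ge 0$, so $\mSigma$ is symmetric positive semidefinite.

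First, write the objective as
\begin{equation*}
f(\vx)=(\vy-\mH\vx)^\top\mSigma(\vy-\mH\vx)=\vy^\top\mSigma\vy-2\vx^\top\mH^\top\mSigma\vy+\vx^\top\mH^\top\mSigma\mH\vx .
\end{equation*}
The cross terms combine into one because $\mSigma$ is symmetric.

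Second, compute the gradient $\nabla f(\vx)=2\mH^\top\mSigma\mH\vx-2\mH^\top\mSigma\vy$ and the Hessian $2\mH^\top\mSigma\mH$. The Hessian is positive semidefinite, since $\vv^\top\mH^\top\mSigma\mH\vv=\norm{\mH\vv}^2_{\mSigma}\ge 0$. So $f$ is a convex quadratic, and any stationary point is a global minimizer.

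Third, solve the normal equations $\mH^\top\mSigma\mH\vx=\mH^\top\mSigma\vy$. Under the proviso that $\mH^\top\mSigma\mH$ (the exact Fisher $\mF$) is invertible, the unique solution is $\vx^*=(\mH^\top\mSigma\mH)^{-1}\mH^\top\mSigma\vy$. In that case the Hessian is in fact positive definite, so $f$ is strictly convex and this minimizer is unique.

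Uniqueness can also be checked directly. Write $\vx=\vx^*+\vv$ and use the normal equations to get $f(\vx)=f(\vx^*)+\norm{\mH\vv}^2_{\mSigma}$. The extra term is strictly positive for $\vv\neq 0$ when $\mH^\top\mSigma\mH\succ 0$.

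There is no real obstacle here. The only points needing care are recording that $\mSigma$ is symmetric PSD, which justifies convexity and merging the cross terms, and noting that the invertibility assumption is exactly what makes the minimizer unique. Without it, minimizers still exist but form an affine set, and the stated closed form no longer applies.
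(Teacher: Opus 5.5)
Your proof is correct. The paper gives no proof of this proposition and only cites \citet{kakade2001natural}, whose argument is the same as yours: set the gradient of the weighted squared error to zero, obtain the normal equations $\mH^\top\mSigma\mH\vx=\mH^\top\mSigma\vy$, and invert the Fisher. Your completed-square identity $f(\vx^*+\vv)=f(\vx^*)+\norm{\mH\vv}^2_{\mSigma}$ cleanly certifies global optimality and uniqueness, and the same computation with Hessian $2(\mH^\top\mSigma\mH+\lambda\mI)\succ 0$ also proves the regularized identity \cref{equ:reg-least-squares}, which the paper likewise states without proof and which needs no invertibility proviso.
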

In practice, the Fisher matrix is estimated from a finite number of samples, (i.e., empirical Fisher),
which can render it ill-conditioned or singular. 
To stabilize inversion, it is common to apply Tikhonov regularization by adding a damping term~\citep{schulman2015trust,martens2015optimizing}. 
\begin{equation}
\text{T-NPG: } \nabla_{\vtheta}^{\text{T-NPG}}J(\theta) \coloneqq (\lambda\mI + \mH^\top \mSigma \mH)^{-1} \mH^\top \mSigma \vy, 
\end{equation}
where $\lambda>0$ is a damping coefficient. 
This Tikhonov-regularized NPG (T-NPG) is equivalently the solution to the regularized least-squares: 
\begin{multline}\label{equ:reg-least-squares}
(\lambda\mI + \mH^\top \mSigma \mH)^{-1} \mH^\top \mSigma \vy \\ = \arg\min_{\vg} \norm{\vy - \mH\vg}^2_{\mSigma} + \lambda\norm{\vg}_2^2
\end{multline}
This formulation motivates our use of iterative least-squares solvers in the following section. 

\subsection{Woodbury Reformulation}
Interestingly, the introduction of Tikhonov regularization enables the use of
the Woodbury formula to transform the inverse~\citep{wu2024improved,guzman2025improving}. 
Applying the formula, 
$(\mI + \mU \mV)^{-1}\mU = \mU (\mI + \mV\mU)^{-1}$ for any conformable matrices $\mU$, $\mV$, 
twice yields
\begin{align}
\nabla_{\vtheta}^{\text{T-NPG}}J(\vtheta) &= (\lambda\mI_{p} + \mH^\top \mSigma \mH)^{-1} \mH^\top \mSigma \vy \\
&= \mH^\top (\lambda\mI_{n} + \mSigma \mH\mH^\top )^{-1} \mSigma \vy \\
&= \mH^\top \mSigma \boxed{(\lambda\mI_{n} + \mH\mH^\top \mSigma )^{-1} \vy }. 
\end{align}
Compared to vanilla policy gradients $\nabla_{\vtheta}^{\text{PG}}J(\theta) = \mH^\top \mSigma \vy$, 
the only difference is the advantage term. 
Specifically, T-NPG can be written as $\nabla_{\vtheta}^{\text{T-NPG}}J(\vtheta) = \mH^\top \mSigma\tilde{\vy}$ where
\begin{equation}
\tilde{\vy} = (\lambda\mI_{n} + \mH\mH^\top \mSigma )^{-1} \vy. 
\end{equation}
Thus, the T-NPG corresponds to a vanilla policy gradient with a \emph{transformed advantage function}, 
\begin{equation}
\bar{A}_{\pi}(s, a) \coloneqq \left[(\lambda\mI_{n} + \mH\mH^\top \mSigma )^{-1} \vy\right]_{(s, a)}. 
\end{equation}
Importantly, the matrix inversion in the above transformation does not depend on the number of parameters $p$ since $\lambda\mI_{n} + \mH\mH^\top \mSigma$ is of size $n\times n$. 
This important property is the key to develop our method.

\subsection{Randomized Kaczmarz Iteration}
A major limitation of the above advantage transformation is that $n$ is typically much larger than $p$, 
essentially in continuous state-action spaces, making the exact transformation infeasible. 
Since T-NPG naturally arises from a least-squares problem,
we turn to randomized iterative solvers for linear systems~\citep{gower2015randomized} 
and propose to approximate the advantage transformation via randomization.

\cref{equ:reg-least-squares} involves weighting by $\mSigma$,
which is generally unknown. 
To bypass this, we replace this objective with an unweighted least squares problem constructed from on-policy samples.
This corresponds to a Monte Carlo approximation in which state-action pairs are drawn i.i.d from $d_{\pi}(s)\pi(a|s)$, 
so that expectations over $\mSigma$ are replaced by empirical averages.
Specifically, sampling $(s, a)\sim d_\pi(s)\pi(a|s)$ yields
\begin{equation}
\E_\tau [ \norm{\vy_\tau - \mH_\tau \vg}^2_2 ]\propto \norm{\vy - \mH\vg}^2_{\mSigma}. 
\end{equation}
Under standard on-policy assumptions, 
the resulting estimator is unbiased in expectation, and 
the discrepancy introduced by ignoring $\mSigma$ vanishes as the batch size increases.

To solve~\cref{equ:reg-least-squares}, 
we adopt randomized block Kaczmarz method~\citep{needell2014paved}. 
Let $\mathcal{D}_k$ denote on-policy samples at iteration $k$, 
partitioned into mini-batches $\{\tau_1, \tau_2, \dots, \tau_{m} \}$. 
Starting from an initial estimate $\vg_0$, at iteration $j$, 
we select a batch $\tau_j$, 
and perform the following:
\begin{equation}\label{equ:reg-update}
\vg_j \leftarrow \arg\min_{\vg} \norm{\vy_{\tau_j} - \mH_{\tau_j}\vg}^2_2 + \lambda\norm{\vg -\vg_{j-1}}_2^2. 
\end{equation}
Here, $\mH_{\tau_j}$ denotes the rows of $\mH$ indexed by $\tau_j$. 
Note that this corresponds to a regularized block update in the randomized block Kaczmarz framework.
In the classical (unregularized) formulation, each step projects the current iterate onto the solution space of the sampled linear system $\mH_{\tau_j}\vg = \vy_{\tau_j}$.
However, enforcing this hard constraint directly can be unstable in our setting due to noise and potential rank deficiency of minibatches.
Thus, we consider the regularized version, which can be viewed as a proximal update that balances fitting the current batch with staying close to the previous estimate.
Importantly, as shown in~\citet{goldshlager2024kaczmarz}, 
\cref{equ:reg-update} admits a closed-form update
\begin{equation}\label{equ:rat-update}
    \vg_j \leftarrow \vg_{j-1} + \mH_{\tau_j}^\top \underbrace{\boxed{(\lambda\mI + \mH_{\tau_j}\mH_{\tau_j}^\top )^{-1} \left(\vy_{\tau_j} - \mH_{\tau_j}\vg_{j-1} \right)}}_{\text{Randomized Advantage Transformation}}.
\end{equation}
The bracketed term performs an advantage transformation on the sampled data. 
We therefore refer to this method as \emph{Randomized Advantage Transformation} (RAT), i.e.,
\begin{equation}
\tilde{A}_j(s, a) \coloneqq \left[(\lambda\mI + \mH_{\tau_j}\mH_{\tau_j}^\top )^{-1} \left(\vy_{\tau_j} - \mH_{\tau_j}\vg_{j-1} \right)\right]_{(s, a)}. 
\end{equation}
With minibatch size $B$ 
we have $\mH_{\tau}\in\R^{B\times p}$ and $\vy_\tau\in\R^{B}$. 
When $B\ll p$, the matrix inversion is only $B\times B$, 
in contrast to the original $n\times n$ matrix. 
The main computational cost arises from forming $\mH_\tau\mH_\tau^\top$, 
which scales as $\gO(pB^2)$; 
this cost can be further reduced using Nystr\"{o}m Approximation~\citep{pmlr-v28-gittens13}. 
Further, as pointed by~\citet{guzman2025improving},
$\mH\mH^\top$ is the neural tangent kernel~\citep{jacot2018neural},
and thus can be approximated efficiently in various ways~\citep{novak2022fast}.
$\mH$ can be estimated efficiently using the per-sample gradients in PyTorch\footnote{\url{https://docs.pytorch.org/tutorials/intermediate/per_sample_grads.html}}. 
The advantage can be computed via \texttt{torch.linalg.solve}:
which directly solve linear system with matrix $(\lambda I + \mH\mH^\top)$ and vector $\vy$. 
It is faster and more numerically stable than explicitly computing the inverse.

The natural policy gradients can then be computed via direct backpropagation using the following PPO-like objective:
\begin{equation}
J_{\text{RAT}}(\vtheta) \coloneqq \mathbb{E}_{(s, a)\sim \mathcal{D}_k}\left[\frac{\pi(a|s; \vtheta)}{\pi_{\text{old}}(a|s)}  \tilde{A}_j(s, a) \right], 
\end{equation}
where $\pi_{\text{old}}$ is the behavior policy used to collect $\mathcal{D}_k$,
and remains the same during the inner iterations. 
\begin{mdframed}
\info{\textbf{Intuition Underlying RAT.} }
RAT can be viewed as an efficient method for constructing a compatible approximation of advantage function by solving linear system $\mH \vx = \vy$ with Tikhonov regularization. 
At each iteration, RAT updates the current estimate $\vg_{j-1}$ by projecting the residual $\vy_\tau - \mH_\tau \vg_{j-1}$ onto the solution space of $\mH_\tau \vx = \vy_\tau$,
with Tikhonov regularization. 
This projection implicitly injects curvature information into the advantage estimates. 
By iterating over mini-batches, RAT progressively aggregates local curvature information, yielding an accurate approximation of the full natural policy gradient without explicitly forming or inverting the Fisher matrix.
\end{mdframed}

Importantly, RAT differs from SPRING~\citep{goldshlager2024kaczmarz} in two key aspects. 
First, RAT has no momentum interpretation,
whereas SPRING can be viewed as a Kaczmarz-inspired momentum-based method. 
Second, RAT performs inner iterations over mini-batches within a single on-policy rollout, 
whereas SPRING applies a single update per batch. 
This distinction is crucial as it enables iterative refinement of the NPG estimate without cumulating $\vg_j$ across rollouts. 

As in KFAC~\citep{martens2015optimizing}, 
gradient norm clipping is essential for stability.
Instead of clipping in the Fisher norm, 
we clip $\ell_2$-norm of the estimated gradients $\vg_j$~\citep{Zhang2020Why}:
$\alpha_j \coloneqq \min\left(\eta, \frac{\nu}{\norm{\vg_j}_2}\right)$, 
where $\nu>0$ is a threshold and $\eta>0$ is the learning rate.

\paragraph{Shared actor-critic architectures.}
We follow~\citet{wu2017scalable} and estimate joint natural policy gradients by modeling the value function as a Gaussian. % and constructing the Fisher matrix from the combined score of the policy and value networks. 
To apply RAT to the critic, we explicitly introduce a pseudo advantage (e.g., an all-ones vector). % which is transformed by RAT to produce a natural-gradient update direction for the value function. 
RAT is then applied jointly to the policy advantage and the critic pseudo advantage, yielding a unified and stable optimization objective for shared actor-critic networks.
We refer readers to~\cref{sec:shared-actor-critic} for details.
This joint application of RAT to both policy and critic updates clearly shows the flexibility of the advantage transformation, and
distinguishes our method from prior work on Woodbury-based approaches~\citep{guzman2025improving}. 
Specifically, while~\citet{guzman2025improving} can in principle be applied in this setting, 
they typically require maintaining separate curvature-adjusted gradients for actor and critic 
and carefully merging them during parameter updates. 
This merging is inherently architecture-dependent, 
as it requires explicit knowledge of which parameters are shared and how gradients from different heads should be combined.
In contrast, RAT introduces a pseudo-advantage formulation that unifies the actor and critic objectives into a single surrogate loss. 
The resulting gradient is computed via standard backpropagation.
As a result, curvature-adjusted updates are handled implicitly by \texttt{autograd}, 
without requiring manual gradient partitioning or architecture-specific merging logic. 
This allows RAT to remain architecture-agnostic in practice, even in shared-network settings.
The full RAT is summarized in~\cref{alg:rat} in Appendix.

\subsection{Convergence Results of RAT}

RAT is closely related to randomized block Kaczmarz method~\citep{needell2014paved} and, more generally, to the class of randomized iterative methods~\citep{gower2015randomized}. 
Its convergence analysis can be viewed as an extension of these methods to the setting of Tikhonov-regularized natural policy gradients. 

We begin with a standard assumption on the matrix $\mH$.
\begin{assumption}[Full column rank]\label{assump:full-col-rank}
The full data matrix $\mH \in \mathbb{R}^{n\times p}$ satisfies
$\operatorname{rank}(\mH)=p$ and $p\ll n$. 
\end{assumption}
This assumption is consistent with the common setting in reinforcement learning 
in which the state-action space is large or continuous,
and function approximation is needed. 

\begin{assumption}[State-action coverage]
Let $\vh_i^\top$ denote $i$-th row of $\mH$, and $\tau$ denote a random minibatch
sampled from $d_\pi(s, a)$.
For each index $i\in\{1,\dots,n\}$,
$\mathbb{P}(i \in \tau) > 0$. 
\end{assumption}
This assumption ensures that every state-action pair has a non-zero probability of being sampled,
which is a standard requirement for defining the natural policy gradient~\citep{bagnell2003covariant,kakade2001natural}.

Under the above assumptions, we obtain the following:
\begin{lemma}
\label{lem:mu_positive_full_coverage}
Define $\mP_\tau \coloneqq \mH_\tau^\top (\lambda \mI + \mH_\tau \mH_\tau^\top)^{-1} \mH_\tau$, 
then
\[
\mu \coloneqq \lambda_{\min}\!\left(\mathbb{E}[\mP_\tau]\right) > 0.
\]
\end{lemma}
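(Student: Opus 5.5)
We show that $\E[\mP_\tau]$ is positive semidefinite, and that any vector $\vv$ with $\vv^\top \E[\mP_\tau]\vv = 0$ must be zero. Since $\E[\mP_\tau]$ is a $p\times p$ symmetric matrix, this gives $\lambda_{\min}(\E[\mP_\tau])>0$. Throughout we assume the minibatch distribution is supported on finitely many index sets, as it is when $\tau$ is a subset of $\{1,\dots,n\}$. The expectation is then a finite convex combination, and no integrability issues arise.

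\textbf{Step 1: semidefiniteness.} Fix a realization $\tau$. For any $\vv\in\R^p$, write $\vx = \mH_\tau\vv$. Then
\[
\vv^\top\mP_\tau\vv = \vx^\top(\lambda\mI + \mH_\tau\mH_\tau^\top)^{-1}\vx .
\]
The matrix $\lambda\mI+\mH_\tau\mH_\tau^\top$ is symmetric with all eigenvalues at least $\lambda>0$, so its inverse is positive definite. Hence $\vv^\top\mP_\tau\vv\ge 0$. It also follows that $\vv^\top\mP_\tau\vv=0$ if and only if $\mH_\tau\vv=0$. Taking expectations shows $\E[\mP_\tau]\succeq 0$.

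For a quantitative version, let $\sigma$ bound the spectrum of $\mH_\tau\mH_\tau^\top$. Then $\vv^\top\mP_\tau\vv \ge \norm{\mH_\tau\vv}^2/(\lambda+\sigma)$. This is not needed for strict positivity.

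\textbf{Step 2: trivial kernel.} Suppose $\vv^\top\E[\mP_\tau]\vv=0$. This quantity is a sum of nonnegative terms $\mathbb{P}(\tau)\,\vv^\top\mP_\tau\vv$, so each term with $\mathbb{P}(\tau)>0$ vanishes. By Step 1, this means $\mH_\tau\vv=0$ for every minibatch in the support.

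Now fix any row index $i$. By the state-action coverage assumption, $\mathbb{P}(i\in\tau)>0$, so some minibatch $\tau$ in the support contains $i$. Therefore $\vh_i^\top\vv=0$. Since $i$ was arbitrary, $\mH\vv=0$. Assumption~\ref{assump:full-col-rank} says $\mH$ has full column rank $p$, which forces $\vv=0$. Thus $\E[\mP_\tau]$ is positive definite and $\mu>0$.

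\textbf{Main obstacle.} The delicate step is moving from the expectation to individual minibatches. This is where finite support of the minibatch distribution matters: it makes "zero expectation of a nonnegative quantity" imply "zero on every positive-probability $\tau$".

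If $\tau$ could range over a continuum, the same conclusion follows from an almost-sure argument. The map $\tau\mapsto\mH_\tau\vv$ vanishes almost surely, and the event $\{i\in\tau\}$ has positive probability, so it must intersect the almost-sure set. Either way, the coverage assumption is exactly what converts per-batch annihilation into annihilation by the full $\mH$.
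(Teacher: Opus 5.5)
Your proposal is correct and follows essentially the same argument as the paper. Both proofs use the equivalence $\vv^\top\mP_\tau\vv=0 \iff \mH_\tau\vv=0$, then the coverage assumption, then full column rank of $\mH$; you argue by contrapositive (a vanishing quadratic form of $\E[\mP_\tau]$ forces $\mH\vv=0$), whereas the paper argues directly that $\mH_\tau\vv\neq 0$ with positive probability for every $\vv\neq 0$.
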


We now present two theorems characterizing the convergence behavior of RAT.
We first analyze an idealized case in which the advantage is \emph{exactly compatible}~\citep{peters2008natural}.
Specifically, for all minibatches $\tau$,
$\vy_\tau = \mH_\tau \vg^*$, where $\vg^*$ is the solution to~\cref{equ:reg-least-squares}. 
\begin{theorem}[Linear convergence of RAT]
\label{thm:rat_linear}
Assume minibatches $\tau_j$ are sampled i.i.d.\ from $d_\pi(s, a)$.
Then
\begin{equation}
\mathbb{E}\|\vg_j - \vg^*\|_2^2 \le (1-\mu)^j \|\vg_0 - \vg^*\|_2^2 .
\end{equation}
\end{theorem}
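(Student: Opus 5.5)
Under the exact-compatibility hypothesis $\vy_\tau=\mH_\tau\vg^*$, we first rewrite the RAT update \cref{equ:rat-update} as a linear recursion for the error $\ve_j\coloneqq\vg_j-\vg^*$. Substituting $\vy_{\tau_j}=\mH_{\tau_j}\vg^*$ gives $\vy_{\tau_j}-\mH_{\tau_j}\vg_{j-1}=-\mH_{\tau_j}\ve_{j-1}$. Hence
\begin{equation*}
\ve_j=\left(\mI-\mP_{\tau_j}\right)\ve_{j-1},
\end{equation*}
with $\mP_\tau$ as in \cref{lem:mu_positive_full_coverage}. The argument then reduces to a one-step contraction in expectation, followed by iteration using independence of the minibatches.

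The key step is a spectral description of $\mP_\tau$. Take the SVD $\mH_\tau=\mU\mS\mV^\top$ with singular values $\sigma_i$. Then
\begin{equation*}
\mP_\tau=\mV\,\mathrm{diag}\!\left(\sigma_i^2/(\lambda+\sigma_i^2)\right)\mV^\top .
\end{equation*}
So $\mP_\tau$ is symmetric positive semidefinite with eigenvalues in $[0,1)$, and the same holds for $\mI-\mP_\tau$. Therefore $(\mI-\mP_\tau)^2\preceq \mI-\mP_\tau$, since $(1-t)^2\le 1-t$ for $t\in[0,1]$. This gives
\begin{equation*}
\|\ve_j\|_2^2=\ve_{j-1}^\top(\mI-\mP_{\tau_j})^2\ve_{j-1}\le \ve_{j-1}^\top(\mI-\mP_{\tau_j})\ve_{j-1}.
\end{equation*}
Unlike an exact Kaczmarz step, $\mP_\tau$ is not an orthogonal projector. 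This domination inequality is the point that replaces idempotence, and it is where care is needed.

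Next, condition on $\tau_1,\dots,\tau_{j-1}$. Since $\tau_j$ is drawn i.i.d.\ and independently of $\ve_{j-1}$,
\begin{equation*}
\E\!\left[\|\ve_j\|_2^2\mid \ve_{j-1}\right]\le \ve_{j-1}^\top\left(\mI-\E[\mP_\tau]\right)\ve_{j-1}\le (1-\mu)\|\ve_{j-1}\|_2^2 ,
\end{equation*}
where the last inequality uses $\lambda_{\min}(\E[\mP_\tau])=\mu$ from \cref{lem:mu_positive_full_coverage}. Note that $\mu<1$, because every $\mP_\tau\prec\mI$, so $1-\mu\in(0,1)$. Taking total expectations and inducting on $j$ (tower property) yields $\E\|\ve_j\|_2^2\le(1-\mu)^j\|\ve_0\|_2^2$.

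The main obstacle is conceptual rather than computational. Exact compatibility is needed so that $\vg^*$ is a common fixed point of every minibatch map $\vg\mapsto\vg+\mH_\tau^\top(\lambda\mI+\mH_\tau\mH_\tau^\top)^{-1}(\vy_\tau-\mH_\tau\vg)$. Without it, a noise term would appear and the bound would become linear convergence to a neighborhood. The only other delicate point is the independence of $\tau_j$ from $\ve_{j-1}$, which holds because $\ve_{j-1}$ depends only on $\tau_1,\dots,\tau_{j-1}$.
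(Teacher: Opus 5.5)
Your proposal is correct and follows essentially the same route as the paper: the error recursion $\ve_j=(\mI-\mP_{\tau_j})\ve_{j-1}$, then the SVD-based domination $(\mI-\mP_\tau)^2\preceq\mI-\mP_\tau$ (the paper's $\mP_\tau^2\preceq\mP_\tau$ rearranged), then conditional expectation with $\lambda_{\min}(\E[\mP_\tau])=\mu$, and finally iteration via the tower property. Your side remark that $\mu<1$ is harmless but not needed for the bound.
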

This theorem, together with~\cref{lem:mu_positive_full_coverage}, guarantees that the RAT update is contractive, 
which is essential for establishing linear convergence.
The convergence rate of RAT is entirely characterized by the spectrum of
$\mathbb{E}[\mP_\tau]$.
In particular, 
as $\mP_\tau =  \mH_\tau^\top\mH_\tau (\lambda \mI + \mH_\tau^\top\mH_\tau)^{-1}$, 
when $\mH_{\tau}^\top\mH_\tau$ (empirical Fisher) has a low rank, 
a smaller $\lambda$ generally leads to a larger $\mu$, 
and thus faster convergence.
This behavior is also observed in our sensitivity analysis. 
In addition, \cref{assump:full-col-rank} applies to the \emph{full matrix} $\mH$, 
not to individual minibatches. 
In practice, minibatch matrices $\mH_\tau$ can be low-rank. 
The Tikhonov damping term $\lambda > 0$ ensures that $(\lambda \mI + \mH_\tau \mH_\tau^\top)$ is always invertible, 
even when $\mH_\tau$ is rank-deficient.
Poor state-action coverage affects the convergence rate through $\mu=\lambda_{\min}\!\left(\E[\mP_\tau]\right)$, 
but does not invalidate the analysis.

We now consider the more realistic case in which the advantage estimates are noisy: 
$\vy_\tau = \mH_\tau \vg^* + \vxi_\tau$,
where $\vxi_\tau$ is a zero-mean random variable satisfying
$\mathbb{E}[\vxi_\tau \mid \tau] = \bm{0}$. 
\begin{theorem}[Convergence with error floor]
\label{thm:rat_noise}
Define
$\eta^2 \coloneqq \mathbb{E}\bigl[\|\mH_\tau^\top (\lambda \mI + \mH_\tau \mH_\tau^\top)^{-1} \vxi_\tau\|_2^2\bigr]$.
Then
\begin{equation}
\mathbb{E}\|\vg_j - \vg^*\|_2^2 \le (1-\mu)^j \|\vg_0 - \vg^*\|_2^2 + \frac{\eta^2}{\mu}.
\end{equation}
\end{theorem}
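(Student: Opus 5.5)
The plan is to track the error $\ve_j \coloneqq \vg_j - \vg^*$ through one RAT step, bound its conditional second moment, and unroll the resulting scalar recursion. Write $\mP_\tau$ as in \cref{lem:mu_positive_full_coverage} and $\mathbf{M}_\tau \coloneqq \mH_\tau^\top(\lambda\mI + \mH_\tau\mH_\tau^\top)^{-1}$, so that $\mP_\tau = \mathbf{M}_\tau\mH_\tau$. Substitute $\vy_{\tau_j} = \mH_{\tau_j}\vg^* + \vxi_{\tau_j}$ into the closed-form update \cref{equ:rat-update} and subtract $\vg^*$. This gives the affine error recursion
\[
\ve_j = (\mI - \mP_{\tau_j})\,\ve_{j-1} + \mathbf{M}_{\tau_j}\vxi_{\tau_j}.
\]
In the noiseless case ($\vxi\equiv 0$) this is exactly the recursion behind \cref{thm:rat_linear}.

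\textbf{Spectral facts about $\mP_\tau$.} Using the push-through identity already invoked in the Woodbury subsection, $\mP_\tau = \mH_\tau^\top\mH_\tau(\lambda\mI + \mH_\tau^\top\mH_\tau)^{-1}$. This matrix is symmetric positive semidefinite with eigenvalues $\sigma_i^2/(\lambda+\sigma_i^2)\in[0,1)$, where the $\sigma_i$ are the singular values of $\mH_\tau$. Two consequences follow:
\begin{itemize}
\item $0 \preceq \mI - \mP_\tau \preceq \mI$, and hence $(\mI-\mP_\tau)^2 \preceq \mI - \mP_\tau$;
\item $\mu = \lambda_{\min}(\E[\mP_\tau]) \in (0,1)$, using \cref{lem:mu_positive_full_coverage} for positivity, so $1-\mu\in(0,1)$.
\end{itemize}

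\textbf{One-step bound.} Let $\mathcal{F}_{j-1}$ be the history up to step $j-1$. Expanding $\|\ve_j\|_2^2$ produces three terms:
\begin{itemize}
\item the contraction term $\ve_{j-1}^\top(\mI-\mP_{\tau_j})^2\ve_{j-1}$;
\item the cross term $2\,\ve_{j-1}^\top(\mI-\mP_{\tau_j})\mathbf{M}_{\tau_j}\vxi_{\tau_j}$;
\item the noise term $\|\mathbf{M}_{\tau_j}\vxi_{\tau_j}\|_2^2$.
\end{itemize}
For the cross term, condition first on $\mathcal{F}_{j-1}$ and $\tau_j$. Then $\ve_{j-1}$, $\mP_{\tau_j}$ and $\mathbf{M}_{\tau_j}$ are fixed, and $\E[\vxi_{\tau_j}\mid\tau_j]=\bm 0$ makes the term vanish. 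This step needs the pair $(\tau_j,\vxi_{\tau_j})$ to be drawn freshly and independently of the past, which is the i.i.d. sampling hypothesis carried over from \cref{thm:rat_linear}. For the contraction term, use $(\mI-\mP_\tau)^2\preceq \mI-\mP_\tau$ and average over $\tau_j$:
\[
\E[\ve_{j-1}^\top(\mI-\mP_{\tau_j})^2\ve_{j-1}\mid\mathcal{F}_{j-1}] \le \ve_{j-1}^\top(\mI-\E\mP_\tau)\ve_{j-1} \le (1-\mu)\|\ve_{j-1}\|_2^2 .
\]
The noise term has expectation exactly $\eta^2$ by definition. Taking total expectations gives
\[
a_j \le (1-\mu)\,a_{j-1} + \eta^2, \qquad a_j \coloneqq \E\|\ve_j\|_2^2 .
\]

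\textbf{Unrolling.} Iterating the recursion yields
\[
a_j \le (1-\mu)^j a_0 + \eta^2\sum_{i=0}^{j-1}(1-\mu)^i \le (1-\mu)^j\|\vg_0-\vg^*\|_2^2 + \frac{\eta^2}{\mu},
\]
where the last step bounds the geometric sum by its infinite limit $1/\mu$. This is the claimed bound.

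\textbf{Main obstacle.} The delicate step is justifying that the cross term vanishes. It requires $\ve_{j-1}$ to be independent of $(\tau_j,\vxi_{\tau_j})$, so the noise must be resampled along with each minibatch and not reused across inner iterations on the same rollout. I would state this explicitly as the reading of the i.i.d. assumption. If it were relaxed, one would instead bound the cross term with Young's inequality, at the cost of a worse contraction factor and a larger error floor.
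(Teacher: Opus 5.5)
Your proof is correct and follows the paper's argument step for step. Both use the same affine error recursion $\ve_j = (\mI-\mP_{\tau_j})\ve_{j-1} + \mS_{\tau_j}\vxi_{\tau_j}$ (your $\mathbf{M}_\tau$ is the paper's $\mS_\tau$), kill the cross term with $\E[\vxi_{\tau}\mid\tau]=\bm{0}$, contract via $\mP_\tau^2\preceq\mP_\tau$, and bound the geometric unrolling by $1/\mu$. Your two additions are details the paper leaves implicit, not a different route: the explicit requirement that $(\tau_j,\vxi_{\tau_j})$ be fresh and independent of the history, and the check that $1-\mu\in(0,1)$ so the unrolling is legitimate.
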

$\eta^2$ effectively quantifies the norm discrepancy between the true gradient and its stochastic estimate. 
This motivates the use of gradient norm clipping in practice.

It is worth noting that \cref{alg:rat} in Appendix implements multiple inner iterations per batch, 
resulting in a time-varying sequence of linear systems. 
We show in~\cref{sec:gap_analysis} that the practical implementation can be interpreted as a \emph{contractive solver tracking a slowly varying sequence of systems}. 
Let $\vg_t^ * $ denote the solution of the regularized least-squares problem defined by the current policy $\vtheta_t$, 
and define the tracking error 
$\ve_t:=\norm{\vg_t - \vg_t^*}$. 
Under our settings (small learning rates and gradient clipping), 
the tracking error remains small, 
yielding a bounded steady-state error of order
$\gO(\max_t ||\vtheta_{t+1} - \vtheta_t||)$. 
This aligns with standard analyses of stochastic approximation in RL, 
where updates track a moving target induced by policy changes, 
and provides a heuristic justification for the implementation. 

\section{Experiments}

We evaluate Randomized Advantage Transformation (RAT) through a combination of controlled illustrations, 
continuous control benchmarks, and high-dimensional visual domains. 
Our goals are to: (1) verify that RAT accurately approximates empirical natural gradients, 
(2) assess its empirical performance and efficiency relative to established natural policy gradients methods, 
and (3) analyze the sensitivity to its key design choices.

\begin{figure}
    \centering
    \includegraphics[width=1.0\linewidth]{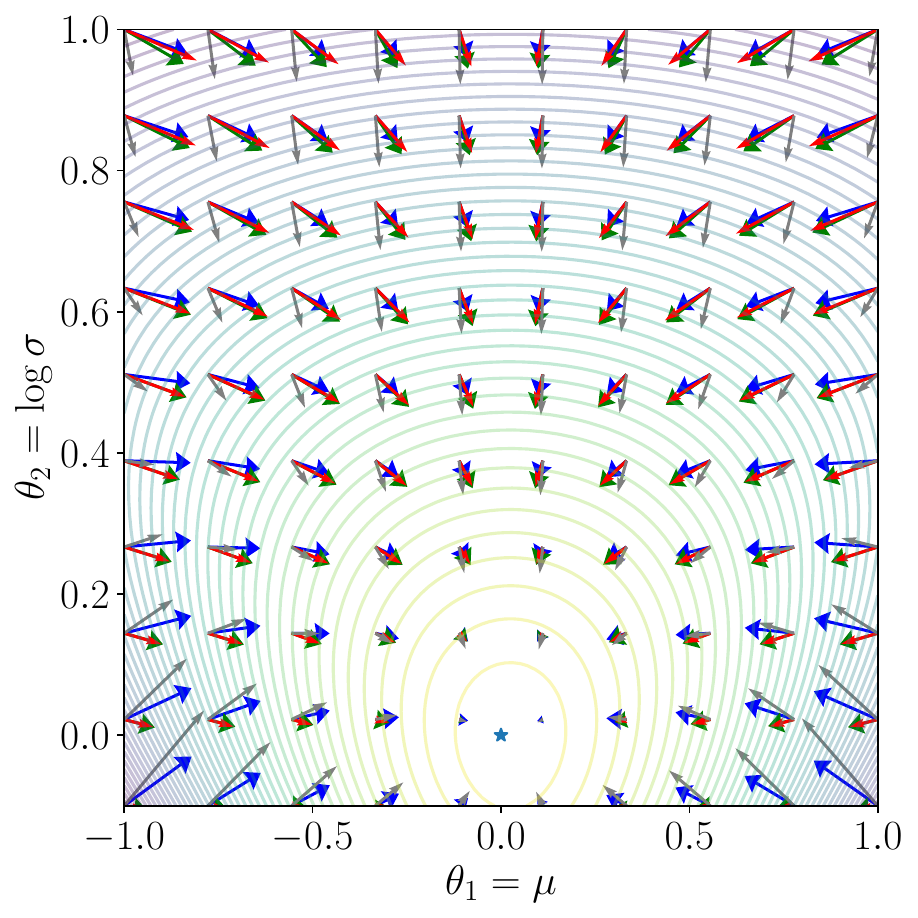}
    \caption{Univariate Gaussian with $\theta_1 = \mu$ and $\theta_2=\log\sigma$ (\blue{closed-formed natural gradients}, \green{empirical natural gradients}, \red{RAT gradients} and \gray{vanilla gradients}; \blue{$\star$} for the optimum). RAT closely approximates empirical natural gradients.}
    \label{fig:illustration}
\end{figure}

\begin{figure*}[ht!]
   \centering
\includegraphics[width=1.0\linewidth]{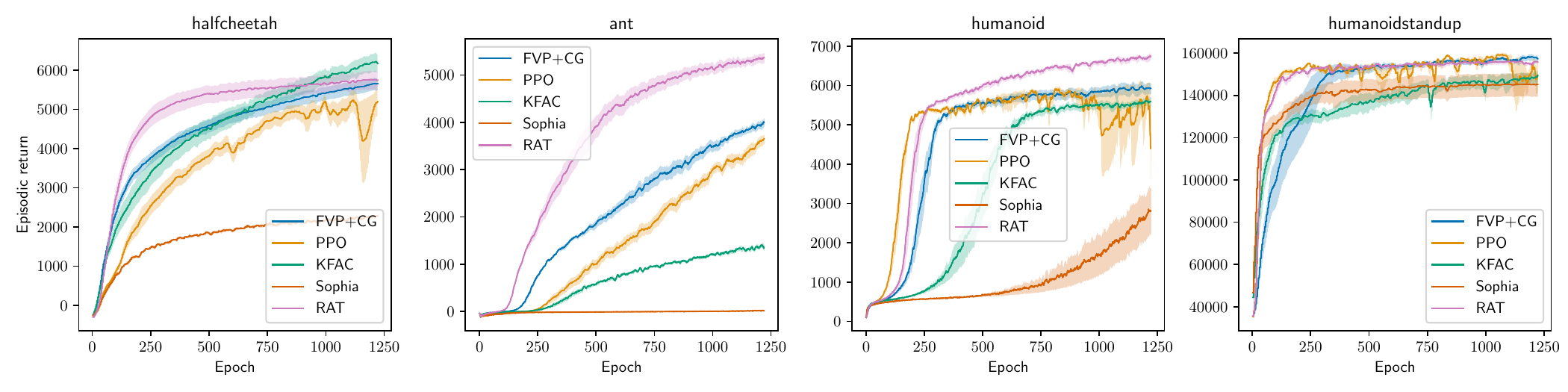}
    \caption{Optimizing MLP policies on continuous control tasks with separate actor-critic networks. RAT outperforms KFAC, FVP+CG and Sophia in most tasks. 
    The shaded region denotes the standard error over 5 random seeds.}
    \label{fig:mujoco}
\end{figure*}
\begin{table*}[h]
\centering
\caption{Final performance (mean $\pm$ stderr over 5 seeds) of different methods on continuous controls with shared actor-critic networks.}
\label{tab:shared-ac}
\begin{adjustbox}{width=\textwidth}
\begin{tabular}{lccccccc}
\toprule
Ep. Returns & Swimmer $\uparrow$ & Hopper $\uparrow$ & HalfCheetah $\uparrow$ & Walker2d $\uparrow$ & Ant $\uparrow$ & Humanoid $\uparrow$ & HumanoidStandup $\uparrow$ \\
\midrule
$\mathcal{S}\times\mathcal{A}$ & $8\times2$ & $11\times3$ & $17\times6$ & $17\times6$ & $105\times8$ & $376\times17$ & $376\times17$  \\
\midrule
\textbf{RAT (Ours)} & \info{$271.6^{\pm 36.3}$} & $2334.6^{\pm 524.9}$ & \info{$4629.2^{\pm 287.4}$} & \info{$3156.0^{\pm 293.6}$} & \info{$2926.6^{\pm 353.1}$} & \info{$5382.7^{\pm 117.3}$} & \info{$146529.7^{\pm 2317.6}$} \\
\textbf{ACKTR}      & $59.1^{\pm 13.0}$  & $2138.9^{\pm 171.6}$ & $3630.9^{\pm 282.6}$ & $2576.6^{\pm 154.6}$ & $23.4^{\pm 3.2}$ & $2571.7^{\pm 838.7}$ & $127928.5^{\pm 5433.7}$ \\
\textbf{PPO}        & $191.3^{\pm 32.7}$ & \info{$2346.8^{\pm 202.7}$} & $4146.0^{\pm 107.5}$ & $2225.3^{\pm 303.4}$ & $1373.9^{\pm 26.0}$ & $5357.9^{\pm 150.9}$ & $130014.2^{\pm 6463.7}$ \\
\textbf{Sophia}      & $57.9^{\pm 5.9}$  & $1104.0^{\pm 90.6}$ & $899.5^{\pm 113.2}$ & $1256.0^{\pm 129.7}$ & $-7.0^{\pm 1.4}$ & $669.4^{\pm 56.2}$ & $111212.6^{\pm 13449.9}$ \\
\bottomrule
\end{tabular}
\end{adjustbox}
\end{table*}

Across all experiments, 
we apply standard stabilization techniques, 
including observation normalization~\citep{mnih2016asynchronous},
advantage normalization~\citep{schulman2017proximal}, 
and PopArt for value normalization~\citep{hessel2019multi}. 
Unless stated otherwise, 
all methods use the same network architectures and training pipelines. 
We report results averaged over five random seeds for a fixed training budget of 1250 epochs, 
which corresponds to approximately 10 million environment steps (a standard budget in continuous control benchmarks).
Additional implementation details are provided in~\cref{app:implement_details}. 
The code for reproducing our results is available at~\href{https://github.com/agent-lab/ICML2026-RAT}{Code URL}\footnote{\url{https://github.com/agent-lab/ICML2026-RAT}}.

\subsection{Illustration of Natural Gradients Estimation}

We begin with a low-dimensional example that admits an analytic form of the natural gradient, 
enabling direct visualization of the update directions. 
Specifically, we consider maximum-likelihood estimation for a univariate Gaussian parameterized
by its mean and log-standard deviation:
$\mathcal{N}(x|\mu, \sigma; \theta_1, \theta_2)$, where $\mu=\theta_1$ and log standard deviation $\log\sigma = \theta_2$. 
In this setting, the inverse Fisher matrix is available in closed form, 
allowing us to compute exact natural gradients (see~\cref{app:likelihood-gradient} for details).
\cref{fig:illustration} compares the vanilla gradients, the natural gradient, 
the empirical natural gradients and the gradient estimated by RAT. 
We also plot the contour lines for $\log \mathcal{N}(x|\theta)$ (loss landscape) to better illustrate
how the natural gradient differs from the vanilla gradient. 
While vanilla gradients follow the steepest ascent direction of $\log\mathcal{N}(x|\theta)$, 
perpendicular to the contour lines, 
the natural gradient accounts for the geometry induced by the parameterization and points more directly towards the optimum.
The updates produced by RAT closely match the empirical natural gradients, 
demonstrating the effectiveness and accuracy of RAT with finite samples.

\subsection{Continuous Control with MLP Policies}
We next evaluate RAT on standard continuous control benchmarks from OpenAI Gym~\citep{brockman2016openai} implemented in MuJoCo~\citep{todorov2012mujoco}. 
We consider Walker2d-v4 ($\mathcal{A}\in\R^6$), HalfCheetah-v4 ($\mathcal{A}\in\R^6$), Ant-v4 ($\mathcal{A}\in\R^{8}$), and Humanoid-v4 ($\mathcal{A}\in\R^{17}$), 
which span action dimensions from 6 to 17.
Policies and value functions are parameterized by two-layer MLPs with 256 hidden units and Tanh activations; 
the policy outputs the mean of a Gaussian distribution, 
with a state-independent log standard deviation. 
We compare RAT against several strong baselines for estimating natural policy gradients,
including Fisher-vector products with conjugate gradient (FVP+CG) from~\citep{schulman2015trust}, 
Kronecker-Factored Approximate Curvature (KFAC)~\citep{martens2015optimizing},
and a diagonal Fisher approximation (i.e., $\text{diag}(\lambda\mI + \mH^\top\mH)$) according to Sophia~\citep{liu2024sophia}.
All methods are implemented within the same codebase, 
and baseline hyperparameters are tuned for best performance.

\begin{table}
\caption{Wallclock time per update (in ms) on continuous control tasks with separate actor-critic networks.}
\begin{center}
\begin{adjustbox}{width=\columnwidth}
\begin{tabular}{rlcccc}
\toprule
& Time (ms) & HalfCheetah $\downarrow$ & Ant $\downarrow$ & Humanoid $\downarrow$ \\
\midrule
\multirow{4}{*}{\rotatebox[origin=c]{90}{Separate}} & \textbf{RAT (Ours)} & $9.83^{\pm 1.49}$ & $10.04^{\pm 1.35}$ & $18.17^{\pm 3.04}$ \\
& \textbf{FVP+CG} & $19.86^{\pm 1.15}$ & $19.95^{\pm 1.20}$ & $19.81^{\pm 1.18}$ \\
& \textbf{KFAC} & $5.60^{\pm 1.28}$ & $5.61^{\pm 1.23}$ & $6.57^{\pm 1.47}$ \\
& \textbf{Sophia} & $3.92^{\pm 0.71}$ & $3.98^{\pm 0.73}$ & $5.71^{\pm 0.68}$ \\
& \textbf{PPO} & $3.12^{\pm 1.38}$ & $3.18^{\pm 1.36}$ & $3.22^{\pm 1.40}$ \\
\midrule
\multirow{4}{*}{\rotatebox[origin=c]{90}{Shared}} & \textbf{RAT (Ours)} & $11.53^{\pm 1.69}$ & $11.66^{\pm 1.55}$ & $19.85^{\pm 3.11}$ \\
& \textbf{ACKTR} & $6.92^{\pm 1.63}$ & $6.85^{\pm 1.55}$ & $7.87^{\pm 1.70}$ \\
& \textbf{Sophia} & $5.97^{\pm 0.94}$ & $6.03^{\pm 1.02}$ & $7.58^{\pm 0.97}$ \\
& \textbf{PPO} & $3.70^{\pm 1.56}$ & $3.70^{\pm 1.46}$ & $3.72^{\pm 1.49}$ \\
\bottomrule
\end{tabular}
\end{adjustbox}
\end{center}
\label{tab:time}
\end{table}

\begin{figure*}
   \centering
   \includegraphics[width=1.0\linewidth]{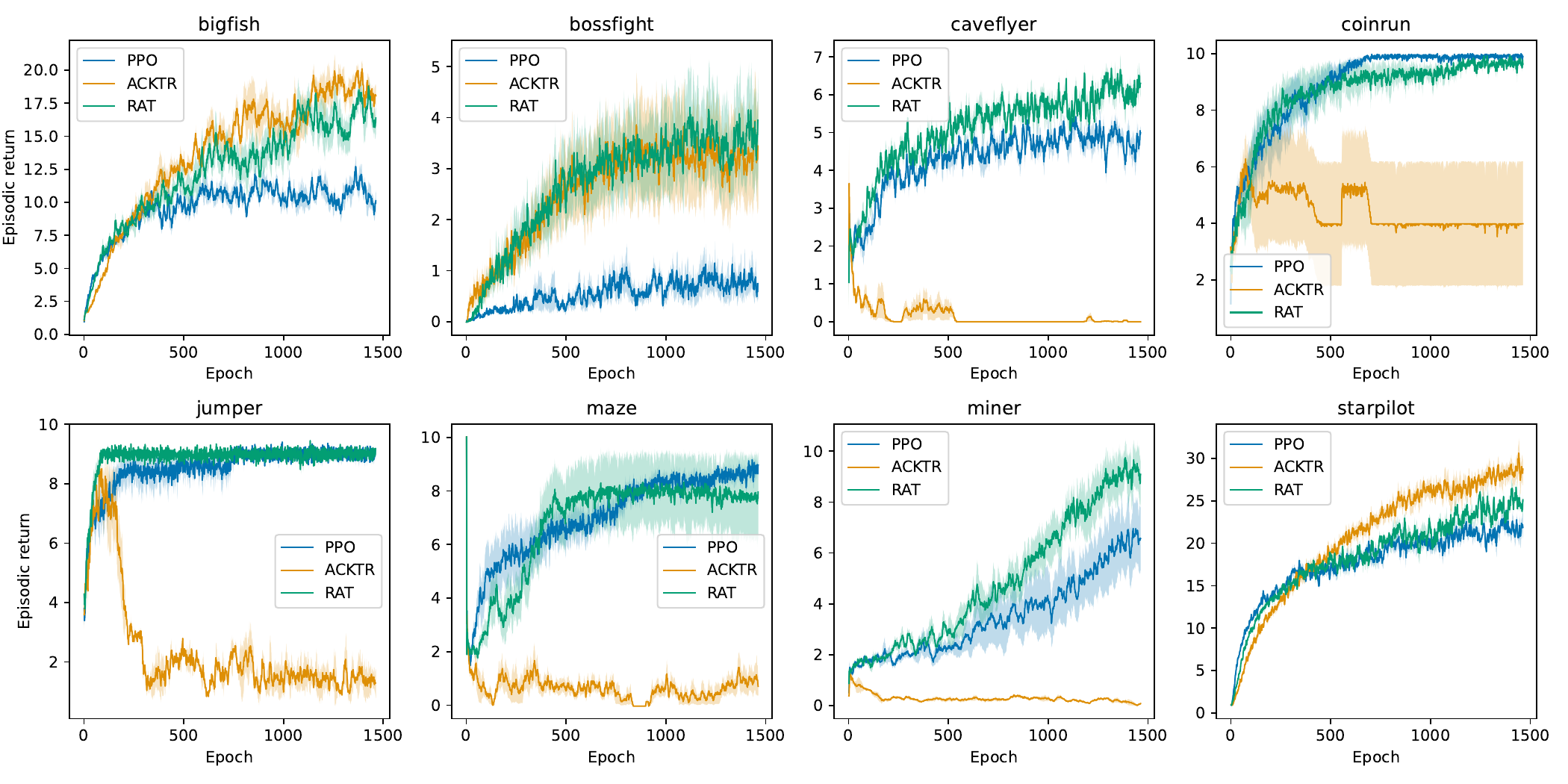}
    \caption{Optimizing ResNet policies for discrete controls in ProcGen environments: RAT performs consistently well across all 8 tasks, delivering comparable or higher episodic returns than all baselines. The shaded region denotes the standard error over 5 random seeds.}
    \label{fig:procgen}
\end{figure*}

\paragraph{Separate actor-critic networks.} 
\cref{fig:mujoco} reports learning curves when actor and critic are optimized separately. 
RAT consistently matches or outperforms all baselines across all tasks,
exhibiting both faster learning and higher final returns. 
In particular, RAT remains stable on challenging tasks such as Ant-v4 and Humanoid-v4,
whereas KFAC frequently learns slowly. 
We also evaluated an enhanced variant of KFAC, i.e., eKFAC~\citep{george2018fast}, 
and found that the eKFAC method did not yield noticeable improvements over KFAC in our experiments, 
see~\cref{fig:kfac-ekfac} in~\cref{app:additional_experiments}. 
The Sophia method performs poorly on all tasks, highlighting the importance of capturing parameter correlations. 
In addition, RAT performs significantly better than PPO on challenging Ant and Humanoid tasks. 

\paragraph{Shared actor-critic networks.} 
We further evaluate RAT in the shared-network setting, 
where curvature estimation is more challenging. 
Since FVP+CG is not directly applicable,
we compare against Proximal Policy Optimization (PPO)~\citep{schulman2017proximal}, 
a simplified approximation to natural policy gradients~\citep{hilton2022batch},
ACKTR~\citep{wu2017scalable}, an extended KFAC method for shared networks, 
 and Sophia~\citep{liu2024sophia}. 
\cref{tab:shared-ac} summarizes final performance. 
RAT achieves the best overall returns on most tasks, 
with substantial gains on Ant and Humanoid, 
highlighting its robustness in challenging shared architectures.

\cref{tab:time} reports wallclock time per update (in ms; averaged over 124 updates on Xeon(R) w5-2445 GeForce RTX 4090).
While the PPO is the fastest, 
RAT is significantly more efficient than FVP+CG
and offers a favorable trade-off between computational cost and performance.
Despite higher per-update cost, 
RAT is most beneficial in regimes where curvature matters (e.g., high-dimensional settings), 
where PPO often plateaus or requires careful tuning. 
RAT is not designed to match PPO's per-step efficiency, 
but to provide a simple, architecture-agnostic, 
and principled approximation to natural policy gradients. 
Compared to existing natural-gradient methods, it offers a stronger performance–compute trade-off while avoiding architecture-specific approximations and complex inner solvers.

\subsection{Visual Control with ResNet Policies}
To assess scalability to high-dimensional visual inputs, 
we evaluate RAT on the challenging Procgen Benchmark~\citep{cobbe2020leveraging}, 
which features procedurally generated environments with 64x64 RGB observations. 
We consider 8 representative environments, 
including \emph{BigFish}, \emph{BossFight}, \emph{CaveFlyer}, \emph{Climber}, \emph{Dodgeball}, \emph{FruitBot}, \emph{Heist}, and \emph{StarPilot}.
Policies are parameterized using a ResNet-based architecture, adapted from~\citet{espeholt2018impala},
and training follows the standard Procgen protocol for evaluating sample efficiency, 
i.e., training and testing on the same distribution of levels in each environment.

\cref{fig:procgen} presents learning curves comparing RAT with PPO and ACKTR. 
RAT performs consistently well across all tasks, 
delivering comparable or higher returns than all baselines
while avoiding training instabilities frequently observed in KFAC-based methods.
These results demonstrate that RAT scales effectively to visual and complex dynamics. 

\begin{figure}[h]
    \centering
    \includegraphics[width=1.0\linewidth]{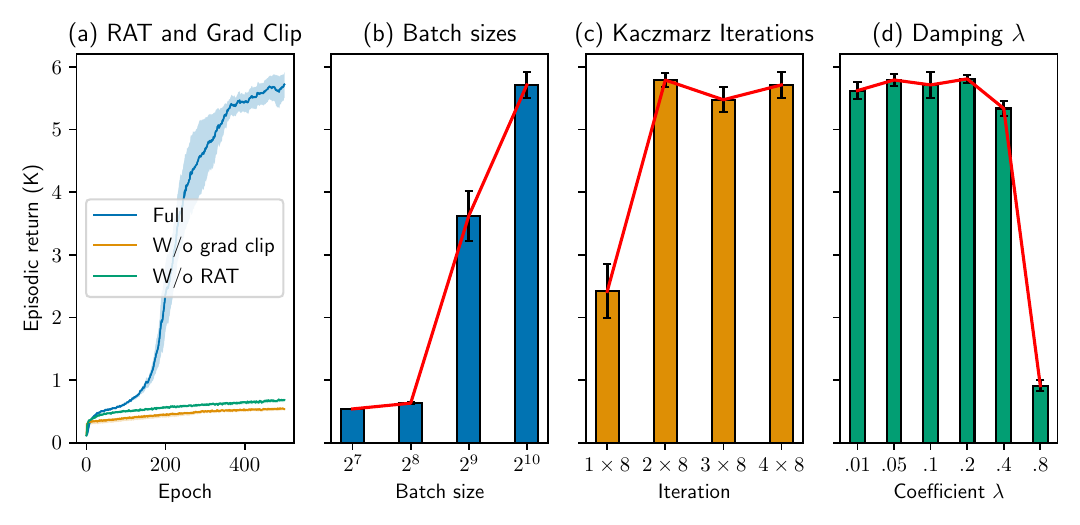}
    \caption{Ablation and sensitivity analysis of RAT on Humanoid.}
    \label{fig:ablation-humanoid}
\end{figure}

\subsection{Ablations and Sensitivity Analysis}
Finally, we conduct an ablation study and sensitivity analysis to pinpoint the influence of key components and hyperparameters of RAT on performance, 
focusing on the challenging Humanoid task.
\cref{fig:ablation-humanoid}(a) presents ablations that remove either the advantage transformation or gradient norm clipping. 
Both components are essential: removing either leads to substantial performance degradation. 
We further study sensitivity of RAT to batch size, number of Kaczmarz iterations and damping coefficient $\lambda$.
The results are presented in \cref{fig:ablation-humanoid}(b), (c) and (d). 
RAT benefits from sufficiently large batch sizes ($2^{10}$) (as the batch size effectively determines the rank of empirical Fisher matrix), 
while remaining relatively robust to the number of Kaczmarz iterations
within a reasonable range (too few iterations result in suboptimal performance). 
The performance of RAT is also robust across a broad range of damping values (from $0.01$ to $0.4$),
with degradation only occurring at very large values (e.g., $\lambda=0.8$).
More analysis on Ant can be found in~\cref{fig:ablation-ant} in Appendix.
Overall, these results indicate that RAT is robust and does not require fine-grained tuning. 

\section{Limitations and Conclusion}
\paragraph{Limitations.}
Despite strong theoretical and empirical results, 
RAT has several limitations. 
First, RAT requires forming minibatch-level matrices $\mH_\tau\mH_\tau^\top$, 
with cost $\gO(pB^2)$. 
While substantially cheaper than full Fisher inversion and architecture-agnostic, 
this can become a bottleneck for large policies or batch sizes; 
low-rank or sketch-based approximations may improve scalability.
Second, the convergence rate of RAT depends on the minimum singular value of $\mP_\tau$.
Poorly conditioned minibatches may slow convergence, although gradient norm clipping alleviates this issue in practice.
Third, our analysis focus on the on-policy setting; extending RAT to full off-policy settings where the advantage function is estimated from replay buffer samples remains an open direction.

\paragraph{Conclusion.}
We proposed Randomized Advantage Transformation (RAT), 
an efficient and architecture-agnostic method for estimating natural policy gradients. 
By using a Woodbury-based reformulation, RAT transforms curvature information into the advantage function and estimates regularized natural policy gradients using standard backpropagation, 
without explicit Fisher construction or conjugate-gradient solvers. 
We provided convergence guarantees and demonstrated strong empirical performance on continuous control and high-dimensional visual benchmarks. 
RAT bridges the gap between principled natural gradient methods and practical deep reinforcement learning, 
offering a simple and scalable alternative for second-order policy optimization.

\section*{Acknowledgements}

This work was supported in part by the Engineering and Physical Sciences Research 
Council (EPSRC) through the AI Hub in Generative Models [grant number EP/Y028805/1].
The author would like to thank the anonymous reviewers for their valuable feedback and suggestions.

% \newpage
\section*{Impact Statement}
This work advances scalable optimization methods for reinforcement learning by enabling efficient computation of natural policy gradients without explicit curvature estimation. 
By simplifying implementation and reducing computational overhead, 
the proposed method may facilitate broader adoption of principled second-order optimization techniques in practice. 
As with reinforcement learning methods in general, potential downstream applications span a wide range of domains and should be deployed responsibly, 
particularly in efficiency-critical settings. 
This paper focuses on algorithmic contributions and does not involve human subjects or sensitive data.

\bibliography{bib}
\bibliographystyle{icml2026}

%%%%%%%%%%%%%%%%%%%%%%%%%%%%%%%%%%%%%%%%%%%%%%%%%%%%%%%%%%%%%%%%%%%%%%%%%%%%%%%
%%%%%%%%%%%%%%%%%%%%%%%%%%%%%%%%%%%%%%%%%%%%%%%%%%%%%%%%%%%%%%%%%%%%%%%%%%%%%%%
% APPENDIX
%%%%%%%%%%%%%%%%%%%%%%%%%%%%%%%%%%%%%%%%%%%%%%%%%%%%%%%%%%%%%%%%%%%%%%%%%%%%%%%
%%%%%%%%%%%%%%%%%%%%%%%%%%%%%%%%%%%%%%%%%%%%%%%%%%%%%%%%%%%%%%%%%%%%%%%%%%%%%%%
\newpage
\appendix
\onecolumn

\section{Gradient Calculation in the Likelihood Example}\label{app:likelihood-gradient}
For the likelihood example, the PG and NPG are given in the closed forms as follows:
\begin{equation*}
\underbrace{\nabla_{\theta}\mathbb{E}_{x}\left[\log \mathcal{N}(x|\theta)\right] = \mathbb{E}_{x}
\begin{bmatrix}
   \frac{(x-\theta_1)}{\exp(2\theta_2)} \\
   -1 + \frac{(x-\theta_1)^2}{\exp(2\theta_2)}
\end{bmatrix}}_{\text{Vanilla gradient}}, 
\quad
\underbrace{F^{-1}(\theta)\nabla_{\theta}\mathbb{E}_{x}\left[\log \mathcal{N}(x|\theta) \right] = 
\mathbb{E}_{x}
\begin{bmatrix}
(x-\theta_1)  \\
-\frac{1}{2} + \frac{(x-\theta_1)^2}{2\exp(2\theta_2)}
\end{bmatrix}}_{\text{Natural gradient}}
\end{equation*}

Specifically, with parameterization $\theta_1=\mu$ and $\theta_2=\log\sigma$, we have
\begin{align}
\mathcal{N}(x|\mu, \sigma; \theta) &= \frac{1}{\sqrt{2\pi}\sigma}\exp{\left[ -\frac{(x-\mu)^2}{2\sigma^2} \right] } = \frac{1}{\sqrt{2\pi}\exp(\theta_2)}\exp{\left[ -\frac{(x-\theta_1)^2}{2\exp(2\theta_2)} \right] },  \\
\log \mathcal{N}(x|\mu, \sigma; \theta) &= -\log\sqrt{2\pi} - \theta_2 - \frac{(x-\theta_1)^2}{2\exp(2\theta_2)}.
\end{align}
The gradient of log probability is given as
\begin{equation}
\nabla_{\theta}\log \mathcal{N}(x|\theta) = 
\begin{bmatrix}
\exp(-2\theta_2)(x-\theta_1) \\
-1 + \exp(-2\theta_2)(x-\theta_1)^2
\end{bmatrix} = 
\begin{bmatrix}
   \frac{(x-\theta_1)}{\sigma^2}  \\
   -1 + \frac{(x-\theta_1)^2}{\sigma^2}. 
\end{bmatrix}
\end{equation}
Accordingly, the Fisher matrix at point $x$ is given
\begin{equation}
F(x, \theta) = 
\begin{bmatrix}
\frac{(x-\theta_1)^2}{\sigma^4} & \frac{(x-\theta_1)^3}{\sigma^4} -\frac{(x-\theta_1)}{\sigma^2} \\
\frac{(x-\theta_1)^3}{\sigma^4} -\frac{(x-\theta_1)}{\sigma^2} & 1 - \frac{2(x-\theta_1)^2}{\sigma^2} + \frac{(x-\theta_1)^4}{\sigma^4}. 
\end{bmatrix}
\end{equation}
Thus, the full Fisher matrix is
\begin{align}
F(\theta) &= \mathbb{E}_{x\sim \mathcal{N}(x|\mu, \sigma, \theta)} 
\begin{bmatrix}
\frac{(x-\theta_1)^2}{\sigma^4} & \frac{(x-\theta_1)^3}{\sigma^4} -\frac{(x-\theta_1)}{\sigma^2} \\
\frac{(x-\theta_1)^3}{\sigma^4} -\frac{(x-\theta_1)}{\sigma^2} & 1 - \frac{2(x-\theta_1)^2}{\sigma^2} + \frac{(x-\theta_1)^4}{\sigma^4}
\end{bmatrix} \\
&= 
\begin{bmatrix}
\mathbb{E}_{x\sim \mathcal{N}(x|\mu, \sigma; \theta)}\left[\frac{(x-\theta_1)^2}{\sigma^4}\right] & \mathbb{E}_{x\sim \mathcal{N}(x|\mu, \sigma; \theta)}\left[\frac{(x-\theta_1)^3}{\sigma^4} -\frac{(x-\theta_1)}{\sigma^2}\right] \\
\mathbb{E}_{x\sim \mathcal{N}(x|\mu, \sigma; \theta)}\left[\frac{(x-\theta_1)^3}{\sigma^4} -\frac{(x-\theta_1)}{\sigma^2}\right] & \mathbb{E}_{x\sim \mathcal{N}(x|\mu, \sigma; \theta)}\left[1 - \frac{2(x-\theta_1)^2}{\sigma^2} + \frac{(x-\theta_1)^4}{\sigma^4} \right]
\end{bmatrix} \label{equ:fim-calculate-expectation} \\
&= 
\begin{bmatrix}
\frac{1}{\sigma^2} & 0 \\
0 & 2
\end{bmatrix} 
= 
\begin{bmatrix}
\exp(-2\theta_2) & 0 \\
0 & 2
\end{bmatrix}\label{equ:fim-calculate-no-expectation} . 
\end{align}
The transition from~\cref{equ:fim-calculate-expectation} to~\cref{equ:fim-calculate-no-expectation} is based on the fact that 
if $x$ has a normal distribution $\mathcal{N}(x|\mu, \sigma)$, the non-central moments exist for any non-negative integer $p$ and are given as follows:
\begin{equation}
    \mathbb{E}_{x}\left[(x-\mu)^p \right] = \begin{cases}
    0 & \text{ if $p$ is odd, } \\
    \sigma^p (p-1)!! & \text{ if $p$ is even. }
    \end{cases}
\end{equation}
For a diagonal Fisher matrix, its inverse exists since $\exp(2\theta_2)>0$ and is given below
\begin{equation}
    F^{-1}(\theta) = 
    \begin{bmatrix}
    \sigma^2 & 0\\
    0 & \frac{1}{2}
    \end{bmatrix} = \begin{bmatrix}
    \exp(2\theta_2) & 0\\
    0 & \frac{1}{2}
    \end{bmatrix}. 
\end{equation}
Thus, the vanilla gradient direction for the parameter update is
\begin{equation}\label{equ:likelihood-vanilla-gradient}
    \nabla_{\theta}\mathbb{E}_{x}\left[ \log p(x|\theta) \right]
    = \mathbb{E}_{x}
    \begin{bmatrix}
    \frac{(x-\theta_1)}{\exp(2\theta_2)}  \\
    -1 + \frac{(x-\theta_1)^2}{\exp(2\theta_2)}
    \end{bmatrix}, 
\end{equation}
and the natural gradient direction is
\begin{equation}\label{equ:likelihood-vanilla-natural-gradient}
    F^{-1}(\theta)\mathbb{E}_{x}\left[\nabla_{\theta}p(x|\theta) \right] = \mathbb{E}_{x}\left[F^{-1}(\theta)\nabla_{\theta}p(x|\theta) \right] = 
\mathbb{E}_{x}
\begin{bmatrix}
(x-\theta_1)  \\
-\frac{1}{2} + \frac{(x-\theta_1)^2}{2\exp(2\theta_2)}
\end{bmatrix}
\end{equation}

For damped Fisher $\lambda I + F(\theta)$, its inverse is given by
\begin{equation}
    \left(\lambda I + F(\theta)\right)^{-1} = 
    \begin{bmatrix}
    \frac{\sigma^2}{1+\lambda\sigma^2} & 0\\
    0 & \frac{1}{2+\lambda}
    \end{bmatrix} 
    = 
    \begin{bmatrix}
    \frac{\exp(2\theta_2)}{1+\lambda\exp(2\theta_2)} & 0\\
    0 & \frac{1}{2+\lambda}
    \end{bmatrix}. 
\end{equation}
The regularized natural gradient direction is
\begin{equation}
    \left(\lambda I + F(\theta)\right)^{-1}\mathbb{E}_{x}\left[\nabla_{\theta}p(x|\theta) \right] = 
\mathbb{E}_{x}
\begin{bmatrix}
\frac{x-\theta_1}{1+\lambda\exp(2\theta_2)}  \\
-\frac{1}{2+\lambda} + \frac{(x-\theta_1)^2}{(2+\lambda)\exp(2\theta_2)}
\end{bmatrix}
\end{equation}

To compute the gradients, we randomly sample 2,000 data points from $\mathcal{N}(0, 1)$. 

\section{Convergence Results of Randomized Advantage Transformation}
\label{app:rat_convergence}

Consider the regularized least-squares objective
\begin{equation}
\min_{\vg \in \mathbb{R}^p}
\;\;
\frac{1}{2}\|\vy - \mH\vg\|_2^2 + \frac{\lambda}{2}\|\vg\|_2^2,
\end{equation}
with unique solution
\begin{equation}
\vg^* := (\mH^\top \mH + \lambda \mI)^{-1} \mH^\top \vy .
\end{equation}
At iteration $j$, RAT samples a minibatch $\tau_j$ and performs the update
\begin{equation}
\vg_{j+1} = \vg_j + \mH_{\tau_j}^\top (\lambda \mI + \mH_{\tau_j} \mH_{\tau_j}^\top)^{-1}
\bigl(\vy_{\tau_j} - \mH_{\tau_j} \vg_j \bigr).
\label{eq:rat_update}
\end{equation}
Define
\begin{equation}
\mP_\tau \coloneqq \mH_\tau^\top (\lambda \mI + \mH_\tau \mH_\tau^\top)^{-1} \mH_\tau,
\qquad
\mS_\tau \coloneqq \mH_\tau^\top (\lambda \mI + \mH_\tau \mH_\tau^\top)^{-1}.
\end{equation}
Let the estimation error be
\begin{equation}
\ve_j := \vg_j - \vg^* .
\end{equation}
\begin{lemma}
For any minibatch $\tau$, the matrix $\mP_\tau$ satisfies
\[
0 \preceq \mP_\tau \preceq \mI,
\qquad
\mP_\tau^2 \preceq \mP_\tau .
\]
\end{lemma}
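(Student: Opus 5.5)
The plan is to diagonalize $\mP_\tau$ with the singular value decomposition of the minibatch matrix, after which all three claims reduce to elementary scalar inequalities.

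\textbf{Step 1 (spectral form).} Fix a minibatch $\tau$ and write the SVD $\mH_\tau = \mU \mSigma_\tau \mV^\top$. Here $\mU\in\R^{B\times B}$ and $\mV\in\R^{p\times p}$ are orthogonal, and $\mSigma_\tau$ is the rectangular diagonal matrix of singular values $\sigma_1,\dots,\sigma_r\ge 0$ (padded with zeros). Then
\[
\lambda\mI + \mH_\tau\mH_\tau^\top = \mU(\lambda\mI + \mSigma_\tau\mSigma_\tau^\top)\mU^\top ,
\]
and this matrix is invertible because $\lambda>0$. Substituting gives
\[
\mP_\tau = \mV\,\mSigma_\tau^\top(\lambda\mI+\mSigma_\tau\mSigma_\tau^\top)^{-1}\mSigma_\tau\,\mV^\top = \mV\,\mathrm{diag}(d_1,\dots,d_p)\,\mV^\top .
\]
The diagonal entries are $d_i = \sigma_i^2/(\lambda+\sigma_i^2)$ for the indices carrying a singular value, and $d_i = 0$ otherwise. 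As a cross-check, the same conclusion follows from the push-through identity already used in the Woodbury reformulation: $\mP_\tau = \mH_\tau^\top\mH_\tau(\lambda\mI+\mH_\tau^\top\mH_\tau)^{-1}$. This is a function of the symmetric positive semidefinite matrix $\mH_\tau^\top\mH_\tau$, so $\mP_\tau$ is symmetric with eigenvalues $s/(\lambda+s)$ for $s\ge0$ ranging over the eigenvalues of $\mH_\tau^\top\mH_\tau$.

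\textbf{Step 2 (scalar bounds).} For $s\ge 0$ and $\lambda>0$ we have $0\le s/(\lambda+s)<1$. Since $\mP_\tau$ is symmetric with all eigenvalues in $[0,1)$, it follows that $0\preceq\mP_\tau\preceq\mI$.

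\textbf{Step 3 (the square).} The matrix $\mP_\tau^2$ has the same eigenvectors as $\mP_\tau$, with eigenvalues $d_i^2$. Because $d_i\in[0,1]$, we have $d_i^2\le d_i$. Hence $\mP_\tau - \mP_\tau^2 = \mV\,\mathrm{diag}(d_i(1-d_i))\,\mV^\top \succeq 0$, which gives $\mP_\tau^2\preceq\mP_\tau$.

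The only point requiring care is Step 1. We need to confirm that $\mP_\tau$ is genuinely symmetric, so that the Loewner order is meaningful, and to handle rank-deficient $\mH_\tau$ correctly. Both are handled by the SVD: rank deficiency just produces zero eigenvalues, and the damping $\lambda>0$ guarantees that the inverse exists. Everything after that is routine.
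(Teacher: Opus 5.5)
Your proposal is correct and matches the paper's proof: diagonalize $\mP_\tau$ through the SVD of $\mH_\tau$ to get eigenvalues $\sigma_i^2/(\lambda+\sigma_i^2)\in[0,1)$, then apply $x^2\le x$ on $[0,1]$ for the second inequality. Your explicit handling of zero-padding in the rank-deficient case and the push-through cross-check are harmless additions that the paper leaves implicit.
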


\begin{proof}
Let $\mH_\tau = \mU \mSigma \mV^\top$ be the singular value decomposition.
Then
\[
\mP_\tau = \mV \mSigma^\top (\lambda \mI + \mSigma \mSigma^\top)^{-1} \mSigma \mV^\top
=
\mV \operatorname{diag}
\!\left(
\frac{\sigma_i^2}{\lambda + \sigma_i^2}
\right)
\mV^\top .
\]
Each eigenvalue lies in $[0,1)$, 
implying $0 \preceq \mP_\tau \preceq \mI$.
Since $x^2 \le x$ for $x \in [0,1]$, we also have
$\mP_\tau^2 \preceq \mP_\tau$.
\end{proof}

\begin{lemma}
Under the assumptions of full column rank of $\mH$ and full data coverage,
\[
\mu \coloneqq \lambda_{\min}\!\left(\mathbb{E}[\mP_\tau]\right) > 0.
\]
\end{lemma}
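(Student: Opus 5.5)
Since $\E[\mP_\tau]$ is symmetric positive semidefinite (an average of matrices $0 \preceq \mP_\tau \preceq \mI$ by the preceding lemma), it suffices to show it has trivial kernel. Equivalently, for every unit vector $\vv\in\R^p$ we want $\vv^\top \E[\mP_\tau]\vv = \E[\vv^\top\mP_\tau\vv] > 0$. Because the minibatch distribution is over a finite family of index sets (subsets of $\{1,\dots,n\}$), the expectation is a finite weighted sum $\sum_\tau \mathbb{P}(\tau)\,\vv^\top\mP_\tau\vv$ with nonnegative terms. Compactness of the unit sphere then turns pointwise positivity into a positive minimum, giving $\mu>0$.

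\textbf{Key step: identify when a single term vanishes.} From the SVD computation in the previous lemma, $\mP_\tau = \mV\operatorname{diag}(\sigma_i^2/(\lambda+\sigma_i^2))\mV^\top$, where the nonzero factors correspond exactly to the row space of $\mH_\tau$. Thus $\vv^\top\mP_\tau\vv = 0$ if and only if $\vv \perp \operatorname{row}(\mH_\tau)$, i.e.\ $\mH_\tau\vv=\bm{0}$. Equivalently, one can write $\vv^\top\mP_\tau\vv = (\mH_\tau\vv)^\top(\lambda\mI+\mH_\tau\mH_\tau^\top)^{-1}(\mH_\tau\vv)$. Since $\lambda>0$, the middle matrix is positive definite, so this quantity vanishes iff $\mH_\tau\vv=\bm{0}$.

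\textbf{Use coverage and rank.} Suppose $\vv^\top\E[\mP_\tau]\vv=0$. Then $\mH_\tau\vv=\bm{0}$ for every $\tau$ with positive probability. By the state-action coverage assumption, every index $i$ lies in some such $\tau$. Hence $\vh_i^\top\vv=0$ for all $i$, i.e.\ $\mH\vv=\bm{0}$. The full column rank assumption ($\operatorname{rank}(\mH)=p$) forces $\vv=\bm{0}$, contradicting $\|\vv\|=1$. So $\E[\mP_\tau]\succ 0$ and $\mu=\lambda_{\min}(\E[\mP_\tau])>0$.

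\textbf{Main obstacle.} The delicate point is passing from ``$\mathbb{P}(i\in\tau)>0$'' to ``some positive-probability $\tau$ containing $i$ has $\mH_\tau\vv=\bm{0}$''. This requires the sum/expectation to be over a discrete (finite) set of minibatches. If minibatches were drawn from a continuous $d_\pi$, I would instead argue that $\vv^\top\mP_\tau\vv = 0$ almost surely, so almost surely $\mH_\tau\vv=\bm{0}$. This gives $\vh_i^\top\vv=0$ on the event $\{i\in\tau\}$, which has positive probability, and that suffices. I would state the finite-$n$ version, matching the paper's tabular notation $n=|\mathcal{S}||\mathcal{A}|$, and remark on this extension.
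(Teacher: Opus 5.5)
Your proof is correct and is essentially the paper's argument: both rest on $\vv^\top\mP_\tau\vv=(\mH_\tau\vv)^\top(\lambda\mI+\mH_\tau\mH_\tau^\top)^{-1}(\mH_\tau\vv)$ vanishing iff $\mH_\tau\vv=\bm{0}$, and then combine full column rank (some $\vh_i^\top\vv\neq 0$) with coverage ($\mathbb{P}(i\in\tau)>0$); the paper argues directly that $\mathbb{P}(\mH_\tau\vv\neq\bm{0})>0$, whereas you argue by contraposition, and your almost-sure remark is exactly what makes this work without assuming discreteness. The compactness step is superfluous, since for a symmetric positive semidefinite matrix, positivity of the quadratic form on nonzero vectors already gives $\lambda_{\min}>0$.
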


\begin{proof}
Recall
\[
\mP_\tau = \mH_\tau^\top(\lambda \mI + \mH_\tau \mH_\tau^\top)^{-1}\mH_\tau \succeq 0,
\quad \text{and} \quad
\vv^\top \mP_\tau \vv = 0 \iff \mH_\tau \vv = 0.
\]
We prove $\mathbb{E}[\mP_\tau]\succ 0$ by showing that no nonzero vector $\vv$ can satisfy
$\mH_\tau \vv = 0$ almost surely.
Fix any $\vv\neq 0$. Since $\mH$ has full column rank, we have $\mH\vv\neq 0$, hence
there exists at least one index $i$ such that $\vh_i^\top \vv \neq 0$.
By full data coverage, $\mathbb{P}(i\in\tau)>0$.
On the event $\{i\in\tau\}$, the submatrix $\mH_\tau$ contains row $\vh_i^\top$, and thus
$\mH_\tau \vv \neq 0$ (because its $i$-th component equals $\vh_i^\top \vv \neq 0$).
Therefore,
\[
\mathbb{P}(\mH_\tau \vv \neq 0) \;\ge\; \mathbb{P}(i\in\tau) \;>\; 0.
\]
Consequently, $\mH_\tau \vv = 0$ cannot hold almost surely for any nonzero $\vv$.

Since $(\lambda \mI + \mH_\tau \mH_\tau^\top)^{-1}\succ 0$, we have
$\vv^\top \mP_\tau \vv>0$ whenever $\mH_\tau \vv\neq 0$, and hence
\[
\vv^\top \mathbb{E}[\mP_\tau] \vv = \mathbb{E}[\vv^\top \mP_\tau \vv] > 0
\quad \text{for all } \vv\neq 0.
\]
Thus $\mathbb{E}[\mP_\tau]\succ 0$, which implies
$\mu=\lambda_{\min}(\mathbb{E}[\mP_\tau])>0$.
\end{proof}

We first analyze the idealized case where minibatch targets are exact.
For all minibatches $\tau$, $\vy_\tau = \mH_\tau \vg^*$. 
\begin{theorem}[Linear convergence of RAT]
Assume minibatches $\tau_j$ are sampled i.i.d.\ from an arbitrary distribution.
Define $\mu \coloneqq \lambda_{\min} \!\left( \mathbb{E}[\mP_\tau] \right)$, 
then
\begin{equation}
\mathbb{E}\|\vg_j - \vg^*\|_2^2 \le (1-\mu)^j \|\vg_0 - \vg^*\|_2^2 .
\end{equation}
\end{theorem}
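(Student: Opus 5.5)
The plan is to derive a one-step error recursion, take conditional expectations using the two lemmas above, and then unroll.

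\emph{Error recursion.} Under exact compatibility we have $\vy_{\tau_j} = \mH_{\tau_j}\vg^*$. Subtracting $\vg^*$ from both sides of the update \eqref{eq:rat_update} gives
\[
\ve_{j+1} = \ve_j - \mS_{\tau_j}\mH_{\tau_j}\ve_j = (\mI - \mP_{\tau_j})\ve_j .
\]
The error therefore evolves by multiplication with the random symmetric matrix $\mI-\mP_{\tau_j}$. No inverse of the full Fisher, and no property of $\vg^*$ beyond compatibility, is needed.

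\emph{Contraction in conditional expectation.} Expanding the squared norm gives
\[
\|\ve_{j+1}\|_2^2 = \ve_j^\top(\mI - 2\mP_{\tau_j} + \mP_{\tau_j}^2)\ve_j .
\]
The first lemma gives $\mP_\tau^2 \preceq \mP_\tau$, so this is at most $\ve_j^\top(\mI-\mP_{\tau_j})\ve_j$. Now condition on the history $\mathcal{F}_j$ generated by $\tau_0,\dots,\tau_{j-1}$. Since $\ve_j$ is $\mathcal{F}_j$-measurable and $\tau_j$ is independent of $\mathcal{F}_j$ with the same law as $\tau$, we get
\[
\mathbb{E}[\|\ve_{j+1}\|_2^2 \mid \mathcal{F}_j] \le \ve_j^\top(\mI - \mathbb{E}[\mP_\tau])\ve_j \le (1-\mu)\|\ve_j\|_2^2 .
\]
The last inequality uses $\mathbb{E}[\mP_\tau]\succeq \mu\mI$, which is the definition of $\mu$. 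By the second lemma, $\mu>0$. By the first lemma, $\mP_\tau\preceq\mI$, hence $\mu\le 1$ and $1-\mu\in[0,1)$, so the factor is a genuine contraction.

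\emph{Unrolling.} Taking total expectations and applying the tower property gives $\mathbb{E}\|\ve_{j+1}\|_2^2 \le (1-\mu)\,\mathbb{E}\|\ve_j\|_2^2$. Inducting on $j$, starting from the deterministic $\ve_0 = \vg_0-\vg^*$, yields the claimed bound $(1-\mu)^j\|\vg_0-\vg^*\|_2^2$.

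\emph{Main obstacle.} The most delicate step is the conditional-expectation argument. One must use independence of $\tau_j$ from the past iterates so that $\mathbb{E}[\mP_{\tau_j}\mid\mathcal{F}_j]=\mathbb{E}[\mP_\tau]$. One must also use $\mP_\tau^2\preceq\mP_\tau$ to drop the quadratic term. Both are provided by the i.i.d.\ sampling assumption and the first lemma.

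A further point to flag is that compatibility must be read as $\vy_\tau=\mH_\tau\vg^*$ for the same $\vg^*$ appearing in the theorem. This makes $\vg^*$ a fixed point of every update, and it is exactly what allows the cancellation in the first step.
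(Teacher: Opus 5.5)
Your proposal is correct and follows the paper's proof essentially step for step: the error recursion $\ve_{j+1}=(\mI-\mP_{\tau_j})\ve_j$, dropping the quadratic term via $\mP_\tau^2\preceq\mP_\tau$, conditioning to replace $\mP_{\tau_j}$ by $\mathbb{E}[\mP_\tau]$, and unrolling. Your explicit filtration argument and your check that $1-\mu\ge 0$ (needed to multiply through in the induction) are small points the paper leaves implicit.
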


\begin{proof}
Using the noise-free assumption, the update \cref{eq:rat_update} becomes
\[
\vg_{j+1} = \vg_j - \mP_{\tau_j} (\vg_j - \vg^*).
\]
Subtracting $\vg^*$ yields
\[
\ve_{j+1} = (\mI - \mP_{\tau_j}) \ve_j .
\]
Conditioned on $\ve_j$ and $\tau_j$,
\[
\|\ve_{j+1}\|_2^2 = \ve_j^\top (\mI - 2\mP_{\tau_j} + \mP_{\tau_j}^2) \ve_j .
\]
Since $\mP_{\tau_j}^2 \preceq \mP_{\tau_j}$,
\[
\mI - 2\mP_{\tau_j} + \mP_{\tau_j}^2 \preceq \mI - \mP_{\tau_j},
\]
and therefore
\[
\|\ve_{j+1}\|_2^2 \le \|\ve_j\|_2^2 - \ve_j^\top \mP_{\tau_j} \ve_j .
\]

Taking conditional expectation,
\[
\mathbb{E}[\|\ve_{j+1}\|_2^2 \mid \ve_j] \le \|\ve_j\|_2^2 - \ve_j^\top \mathbb{E}[\mP_\tau] \ve_j .
\]
Since $\ve_j^\top \mathbb{E}[\mP_\tau] \ve_j \ge \mu \|\ve_j\|_2^2$,
\[
\mathbb{E}[\|\ve_{j+1}\|_2^2 \mid \ve_j] \le (1-\mu)\|\ve_j\|_2^2 .
\]
Iterating proves the claim.
\end{proof}
We now consider stochastic targets, as in reinforcement learning.
For each minibatch $\tau$,
$\vy_\tau = \mH_\tau \vg^* + \vxi_\tau$, where the noise $\vxi_\tau$ satisfies $\mathbb{E}[\vxi_\tau \mid \tau] = \bm{0}$. 

\begin{theorem}[Convergence with error floor]
Define
$\eta^2 \coloneqq \mathbb{E}\bigl[\|\mS_\tau \vxi_\tau\|_2^2\bigr]$, 
then
\begin{equation}
\mathbb{E}\|\vg_j - \vg^*\|_2^2 \le (1-\mu)^j \|\vg_0 - \vg^*\|_2^2 + \frac{\eta^2}{\mu}.
\end{equation}
\end{theorem}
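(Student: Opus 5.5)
Substituting $\vy_{\tau_j} = \mH_{\tau_j}\vg^* + \vxi_{\tau_j}$ into \cref{eq:rat_update} and subtracting $\vg^*$ gives the noisy error recursion
\[
\ve_{j+1} = (\mI - \mP_{\tau_j})\ve_j + \mS_{\tau_j}\vxi_{\tau_j}.
\]
The plan is to get a one-step bound of the form $\mathbb{E}[\|\ve_{j+1}\|_2^2 \mid \ve_j] \le (1-\mu)\|\ve_j\|_2^2 + \eta^2$ and then unroll it.

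\textbf{Step 1 (expand the square).} We have
\[
\|\ve_{j+1}\|_2^2 = \|(\mI-\mP_{\tau_j})\ve_j\|_2^2 + 2\,\ve_j^\top(\mI-\mP_{\tau_j})\mS_{\tau_j}\vxi_{\tau_j} + \|\mS_{\tau_j}\vxi_{\tau_j}\|_2^2 .
\]
The first term is handled exactly as in the proof of \cref{thm:rat_linear}. Since $\mP_\tau^2 \preceq \mP_\tau$ (first lemma of this appendix), it is at most $\|\ve_j\|_2^2 - \ve_j^\top\mP_{\tau_j}\ve_j$.

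\textbf{Step 2 (kill the cross term).} Condition first on $\ve_j$ and on $\tau_j$. Both $\mI-\mP_{\tau_j}$ and $\mS_{\tau_j}$ are deterministic functions of $\tau_j$. The iterate $\ve_j$ depends only on $\tau_0,\dots,\tau_{j-1}$ and their noise, so it is independent of the fresh draw $(\tau_j,\vxi_{\tau_j})$ by the i.i.d.\ sampling. Hence $\mathbb{E}[\vxi_{\tau_j}\mid\tau_j,\ve_j] = \mathbb{E}[\vxi_{\tau_j}\mid\tau_j] = \bm 0$, and the cross term has zero conditional mean.

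This is the step I expect to require the most care. It uses precisely the zero-mean-given-$\tau$ assumption, together with the fact that the noise in the current block is independent of the past iterate. I would state this independence explicitly, since the paper only writes $\mathbb{E}[\vxi_\tau\mid\tau]=\bm 0$. By the tower property, the third term has conditional mean exactly $\eta^2$.

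\textbf{Step 3 (one-step contraction).} Averaging over $\tau_j$ and using $\ve_j^\top\mathbb{E}[\mP_\tau]\ve_j \ge \mu\|\ve_j\|_2^2$ from \cref{lem:mu_positive_full_coverage} gives
\[
\mathbb{E}[\|\ve_{j+1}\|_2^2\mid\ve_j] \le (1-\mu)\|\ve_j\|_2^2 + \eta^2 .
\]
Note $0<\mu\le 1$, because $\mP_\tau\preceq\mI$.

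\textbf{Step 4 (unroll).} Taking full expectations and iterating yields
\[
\mathbb{E}\|\ve_j\|_2^2 \le (1-\mu)^j\|\ve_0\|_2^2 + \eta^2\sum_{k=0}^{j-1}(1-\mu)^k .
\]
The geometric sum is at most $1/\mu$, which gives the claimed bound $(1-\mu)^j\|\vg_0-\vg^*\|_2^2 + \eta^2/\mu$.
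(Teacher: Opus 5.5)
Your proposal is correct and takes the same route as the paper's proof. Both derive the noisy error recursion $\ve_{j+1} = (\mI-\mP_{\tau_j})\ve_j + \mS_{\tau_j}\vxi_{\tau_j}$, drop the cross term using $\mathbb{E}[\vxi_{\tau_j}\mid\tau_j]=\bm 0$, obtain the one-step bound $(1-\mu)\|\ve_j\|_2^2+\eta^2$, and unroll it with the geometric sum bounded by $1/\mu$. Your version is more careful than the paper's in two places: it spells out the first term via $\mP_\tau^2\preceq\mP_\tau$, and it states explicitly that the pair $(\tau_j,\vxi_{\tau_j})$ must be a fresh draw independent of $\ve_j$, which the paper leaves implicit but which the vanishing of the cross term needs.
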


\begin{proof}
Substituting the stochastic model into~\cref{eq:rat_update} yields
\[
\ve_{j+1} = (\mI - \mP_{\tau_j}) \ve_j + \mS_{\tau_j} \vxi_{\tau_j}.
\]
Squaring and taking conditional expectation,
the cross term vanishes since
$\mathbb{E}[\vxi_{\tau_j} \mid \tau_j] = \bm{0}$, giving
\[
\mathbb{E}[\|\ve_{j+1}\|_2^2 \mid \ve_j] \le (1-\mu)\|\ve_j\|_2^2 + \eta^2 .
\]
Unrolling the resulting recursion completes the proof.
\end{proof}

\section{Implementation Details}
\label{app:implement_details}
We implemented RAT using the per-sample gradients feature in PyTorch\footnote{\url{https://docs.pytorch.org/tutorials/intermediate/per_sample_grads.html}}. 
Specifically, we first compute the per-sample gradients of the policy network's outputs with respect to its parameters using the built-in function \texttt{torch.func.grad}, \texttt{torch.func.vmap} and \texttt{torch.func.functional\_call}, 
and then flat these per-sample gradients to compute $\mH$.
When computing $\mH\mH^\top$, we average over the samples in the mini-batch. 
We use \texttt{torch.linalg.solve} to solve the linear system involving the damped Fisher matrix $(\lambda I + \mH\mH^\top)$, 
and thus apply the advantage transformation. 
This is a faster and more numerically stable way than performing the computations separately.

Besides, we also incorporated the following training techniques:
\paragraph{Observation normalization.}
We normalize the observations with running mean and standard deviation as in~\citep{schulman2017proximal}
for all the MuJoCo tasks and set the clip range to $[-5, 5]$.
For tasks with image observations, we normalize the pixel values to $[0, 1]$ by dividing them by 255
and then normalize each pixel value with 0.5 mean and 0.5 standard deviation (to ensure the pixel values are in the range of $[-1, 1]$).
We also stack the last three frames as the input to the policy network.
We found that observation normalization is crucial for stabilizing training, especially for tasks in the Mujoco suite.
Note that after observation normalization, 
we re-evaluate the action distribution's mean and variance, 
i.e., $\mu(s)$ and $\sigma(s)$,
to ensure that the action distribution is consistent with the normalized observations.

\paragraph{Advantage normalization.}
We use the Generalized Advantage Estimation (GAE)~\citep{schulman2015high} to compute the advantage estimates.
We then normalize the advantage estimates to have zero mean and unit standard deviation
within each batch as in~\citep{schulman2017proximal}.
We found that advantage normalization is important for stabilizing training,
especially when using high learning rates.

\paragraph{PopArt value normalization. }
We use the PopArt normalization technique~\citep{hessel2019multi} to normalize the value function targets.
Specifically, we maintain running estimates of the mean $\mu$ and standard deviation $\sigma$
of the value function targets and normalize the targets as $(G_t - \mu) / \sigma$.
We also adjust the parameters of the value network to account for the change in normalization
following the procedure described in~\citep{hessel2019multi}.
We set the decay rate for the running estimates to 0.99999. 
One slight improvement we made in our implementation is that 
we correct the bias in the running estimates of the mean and standard deviation
by dividing the estimates by $(1 - \text{decay}^{t})$ at time step $t$, 
similar to Adam~\citep{kingma2014adam}.

\paragraph{Gradient clipping.}
We clip the gradient norm to be at most 0.5
when updating the shared policy and value networks.
If the gradient norm exceeds this threshold,
we scale down the gradient to have a norm of 0.5.
We also tried the Fisher norm clipping technique proposed in~\citep{ba2017distributed},
but found that $l2$ norm clipping works equally well in our experiments.
When the actor and critic networks are separate,
we apply gradient clipping to the policy network with a threshold of 0.5,
and to the value network with a threshold of 5.0.

\paragraph{Action squashing.}
For environments with bounded action spaces,
we apply a squashing function (tanh) to the actions sampled from the Gaussian policy
to ensure that the actions lie within the valid range.
Different from the procedure described in~\citep{haarnoja2018soft}, 
we do not adjust the log-probability of the actions
to account for the squashing transformation. 
This is because the policy ratios, KL divergences, and Fisher matrix
are all invariant to such transformations,
as long as the transformation is differentiable and invertible.

\paragraph{Ratio clamping. } 
When computing the policy ratios,
we clamp ratios to be within $[10^{-1}, 10^1]$
to avoid numerical instability.

\subsection{RAT in Shared Actor-Critic}\label{sec:shared-actor-critic}
In Actor-Critic methods, 
the actor and critic often share a common neural architecture~\citep{mnih2016asynchronous,wu2017scalable}.
When parameters are shared,
we follow~\citet{wu2017scalable} and estimate the joint natural policy gradients for the actor and critic.
Specifically, we model the value output as a Gaussian distribution with fixed variance $\sigma^2$, i.e., $p(v|s) = \gN(v; V(s), \sigma^2)$, 
where $v$ denotes a target value obtained from Monte-Carlo rollouts or Temporal Difference (TD) methods~\citep{sutton2018reinforcement}.
The critic is trained by maximizing the log-likelihood of this distribution, 
and the Fisher matrix for the critic is defined with respect to the corresponding log-likelihood.
In practice we set $\sigma$ to 1 without loss of generality, 
yielding $\log p(v|s) \propto -\norm{v - V(s)}^2$.
\begin{equation*}
\max\E_{v\sim q}\left[\log p(v|s)\right] = \E_{v\sim q}\left[ -\norm{v - V(s)}^2 \right]
\end{equation*}
Under this formulation, the score function for the joint distribution $p(a, v|s)$ factorizes as
\begin{equation*}
\nabla_{\vtheta}\log p(a, v|s, \vtheta) = \nabla_{\vtheta}\log\pi(a|s; \vtheta) + \nabla_{\vtheta}\log p(v|s; \vtheta), 
\end{equation*}
which is used to construct the matrix $\mH$ in RAT.
Following~\citet{wu2017scalable}, we sample the network outputs independently for the actor and critic, 
and inject unit-variance Gaussian noise to the value outputs. 

To apply RAT to the critic, we introduce a pseudo advantage for the value loss, 
such as an all-ones vector $\mathbf{1}$ with the same size as the mini-batch.
Applying RAT to this pseudo advantage yields $\tilde{w}(s)$ for each state, 
which can be interpreted as the natural gradient update direction for the critic.
The resulting joint loss for optimizing shared actor-critic networks is 
\begin{equation*}
\gL(\vtheta) = -\E\left[\frac{\pi(a|s; \vtheta)}{\pi_{\text{old}}(a|s)} \tilde{A}(s, a) \right] + \E\left[\tilde{w}(s)\norm{v - V(s; \vtheta)}^2\right]
\end{equation*}
where $\tilde{A}(s, a)$ is the transformed advantage for the actor,
and $\tilde{w}(s)$ is the transformed pseudo advantage for the critic.
At each iteration, RAT is applied jointly to transform both the actor advantage for the critic pseudo advantage, 
enabling a unified and stable natural-gradient update for shared acrtor-critic networks.

\begin{algorithm}
\caption{Randomized Advantage Transformation (RAT)}
\label{alg:rat}
\begin{algorithmic}[1]
\STATE \textbf{Input:} Initial policy parameters $\vtheta_0$, batch size $B$, total iterations $K$
\FOR{$k=0$ to $K-1$}
\STATE Collect on-policy samples: $\pi_{\text{old}}(a|s) \leftarrow \pi(a|s; \vtheta_k)$
% $\gD_k = \{(s_i, a_i, A_k(s_i, a_i))\}_{i=1}^{N}$ by rolling out policy $\pi(a|s; \vtheta_k)$
\STATE Randomly partition $\gD_k$ into batches of size $B$
\FOR{each batch $\tau_j$ in $\gD_k$}
\STATE Form $\mH_{\tau_j}$ using samples in $\tau_j$
\STATE Apply RAT: 
\begin{equation*}
\tilde{A}_j(s, a) = \left[(\lambda\mI + \mH_{\tau_j}\mH_{\tau_j}^\top )^{-1} \left(\vy_{\tau_j} - \mH_{\tau_j}\vg_{j-1} \right)\right]_{(s, a)}
\end{equation*}
\STATE Estimate gradient via a single backpropagation of the following loss: 
\begin{equation*}
  \vg_j = -\frac{\partial}{\partial\vtheta}\E\left[\frac{\pi(a|s; \vtheta)}{\pi_{\text{old}}(a|s)} \tilde{A}_{j}(s, a) \right]  
\end{equation*}
\STATE Apply gradient clipping $\alpha_k \coloneqq \min\left(\eta, \frac{\nu}{\norm{\vg_j}_2}\right)$. 
\STATE Update policy parameters: $\vtheta \leftarrow \vtheta + \alpha_k \vg_j$
\ENDFOR
\ENDFOR
\STATE \textbf{Output:} Policy parameters $\vtheta$
\end{algorithmic}
\end{algorithm}

\subsection{Gap between Theoretical Analysis and Practical Implementation}\label{sec:gap_analysis}
The theorems analyze RAT as a fixed-policy linear system solver, 
while~\cref{alg:rat} interleaves these updates with policy optimization, 
resulting in a time-varying sequence of systems.
This makes the linear system non-stationary across inner iterations. 
The resulting algorithm is closer to ``multiple PPO-like updates per rollout with curvature-corrected advantages'' than to an iterative linear solver converging to a fixed target. 
To clarify the gap formally, 
let $\vg_t^ * $ denote the solution of the regularized least-squares problem defined by the current policy $\vtheta_t$, 
and define the tracking error 
\begin{equation}
\ve_t:=\norm{\vg_t - \vg_t^*}. 
\end{equation}
At iteration $t$, RAT performs an update yielding $\vg_{t+1}$. 
The fixed-system analysis (~\cref{thm:rat_linear}) implies a contraction:
\begin{equation}
\norm{\vg_{t+1} - \vg_t^ * } \leq \rho \norm{\vg_t - \vg_t^ * } = \rho \ve_t, 
\end{equation}
where $\rho = 1-\mu <1$. 
After the policy update $\vtheta_t\mapsto\vtheta_{t+1}$, 
the target solution shifts. 
Under standard smoothness assumptions on $\mH$ and $\vy$, 
the solution map $\vtheta\mapsto \vg^*(\vtheta)$ is Lipschitz:
\begin{equation}
\norm{\vg^ * (\vtheta_{t+1}) - \vg^*(\vtheta_t)} \leq L\norm{\vtheta_{t+1} - \vtheta_{t}}. 
\end{equation}
Combining these yields:
\begin{equation}
\ve_{t+1}\leq \rho \ve_t + L\norm{\vtheta_{t+1} - \vtheta_t}
\end{equation}
Unrolling:
\begin{equation}
\ve_t \leq \rho^t \ve_0 + L\sum_{s=0}^{t-1}\rho^{t-1-s}\norm{\vtheta_{s+1} - \vtheta_s}
\end{equation}
This shows that RAT can be interpreted as a \emph{contractive solver tracking a slowly varying sequence of systems}. 
Under our settings (small learning rates and gradient clipping), 
the drift term remains small, yielding a bounded steady-state error of order
\begin{equation}
\gO(\max_t \norm{\vtheta_{t+1} - \vtheta_t})
\end{equation}
This is a standard tracking bound for contractive iterative methods applied to slowly varying systems and provides a principled justification for the interleaved algorithm.

Specifically, while the Lipschitz constant $L$ is not directly measurable, 
the step size $\norm{\vtheta_{t+1} - \vtheta_t}$ is explicitly controlled by the learning rate and gradient clipping. 
In particular, we use $\ell_2$ gradient norm clipping at $0.5$ together tiwth a learning rate $0.05$ for the policy network,
which ensures that each parameter update is bounded by at most $0.5\times0.05=0.025$ in $\ell_2$ norm. 
This keeps the drift term small throughout training up to the unknown constant $L$. 
More broadly, this is precisely why gradient clipping and small learning rates are important in our implementation: 
they ensure that the target system evolves slowly enough for the tracking interpretation to be meaningful.

\section{Hyperparameters}
We summarize the hyperparameters used in our experiments in~\cref{tab:hyperparameters}.
\begin{table}[H]
    \centering
    \caption{Common hyperparameters used in our experiments.}
    \begin{tabular}{l c}
        \toprule
        Hyperparameter & Value \\
        \midrule
        Discount factor $\gamma$ & 0.99 \\
        GAE parameter $\lambda$ & 0.95 \\
        Damping factor $\lambda$ & $10^{-1}$ \\
        Mini-batch size & 1024 \\
        PPO clipping parameter $\epsilon$ & 0.2 \\
        Number of epochs per update & 8 \\
        Number of steps per update & 256 * 32 \\
        Gradient clipping threshold (policy) & 0.5 \\
        Gradient clipping threshold (value) & 5.0 \\
        PopArt decay rate & 0.99999 \\
        Observation normalization clip range & $[-5, 5]$ \\
        Entropy coefficient & 0 \\
        \bottomrule
    \end{tabular}
    \label{tab:hyperparameters}
\end{table}

\begin{table}
    \centering
    \caption{Hyperparameters for RAT.}
    \begin{tabular}{l c}
        \toprule
        Hyperparameter & Value \\
        \midrule
        pi lr & 0.05 \\
        vf lr & 0.001 \\
        lr for shared network & 0.1 \\
        damping for MLP & 0.1 \\
        damping for CNN \& ResNet & 0.5 \\
        \bottomrule
    \end{tabular}
    \label{tab:hyperparameters-rat}
\end{table}

\begin{table}
    \centering
    \caption{Hyperparameters for KFAC (adopted from~\citep{george2018fast}).}
    \begin{tabular}{l c}
        \toprule
        Hyperparameter & Value \\
        \midrule
        lr & 0.001 \\
        momentum & 0.9 \\
        stat decay & 0.95 \\
        damping & 0.001 \\
        kl clip & 0.001 \\
        weight decay & 0 \\
        TCov & 1 \\
        TInv & 10 \\
        batch averaged & True \\
        \bottomrule
    \end{tabular}
    \label{tab:hyperparameters-kfac}
\end{table}

\begin{table}[H]
    \centering
    \caption{Hyperparameters for PPO.}
    \begin{tabular}{l c}
        \toprule
        Hyperparameter & Value \\
        \midrule
        pi lr & 0.001 \\
        vf lr & 0.001 \\
        lr for shared networks & 0.001 \\
        clip range & 0.2 \\
        \# of epochs & 4 \\
        \# of mini-batches & 8 \\
        \# of steps per update & 256 * 32 \\
        \bottomrule
    \end{tabular}
    \label{tab:hyperparameters-ppo}
\end{table}

Architecture details: 
\begin{itemize}
    \item For MuJoCo tasks, we use a two-layer MLP with 256 hidden units per layer and tanh activations
    for both the policy and value networks.
    % \item For Atari tasks, we use the same CNN architecture as in~\citep{mnih2015human}:
    % three convolutional layers with 32, 64, and 64 filters respectively,
    % followed by a fully connected layer with 256 units.
    % We use ReLU activations after each layer.
    \item For Procgen tasks, we use the same ResNet architecture as in~\citep{cobbe2020leveraging}:
    four residual blocks with 16, 32, and 32 filters respectively,
    followed by a fully connected layer with 256 units.
    We use ReLU activations after each layer.
\end{itemize}

\section{Additional Experimental Results}\label{app:additional_experiments}
\begin{figure}[h]
   \centering
\includegraphics[width=1.0\linewidth]{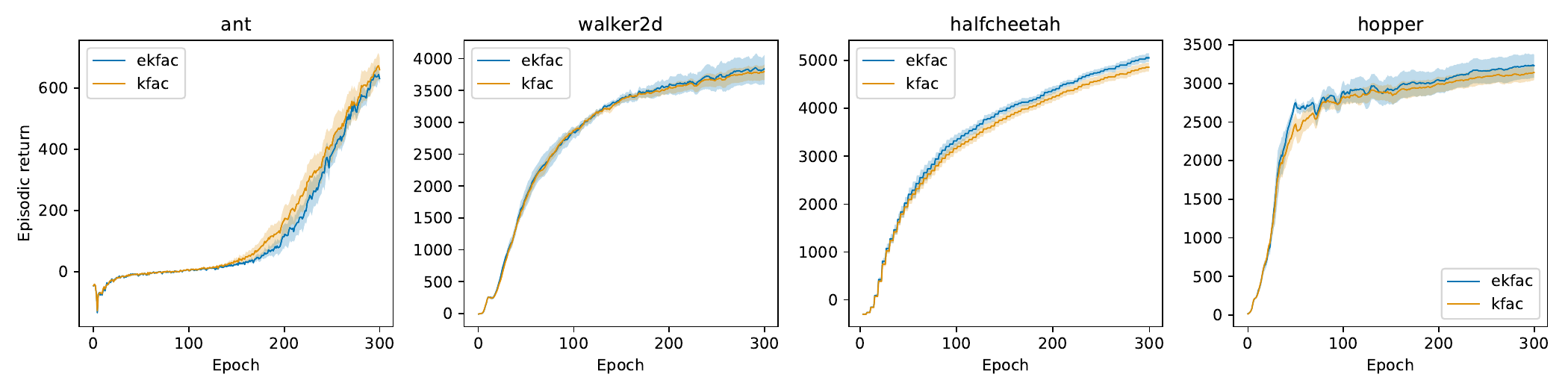}
    \caption{Comparing KFAC and EKFAC on Continuous Control Tasks. EKFAC performs similar to KFAC in most tasks.}
    \label{fig:kfac-ekfac}
\end{figure}

\begin{figure}[h]
   \centering
\includegraphics[width=1.0\linewidth]{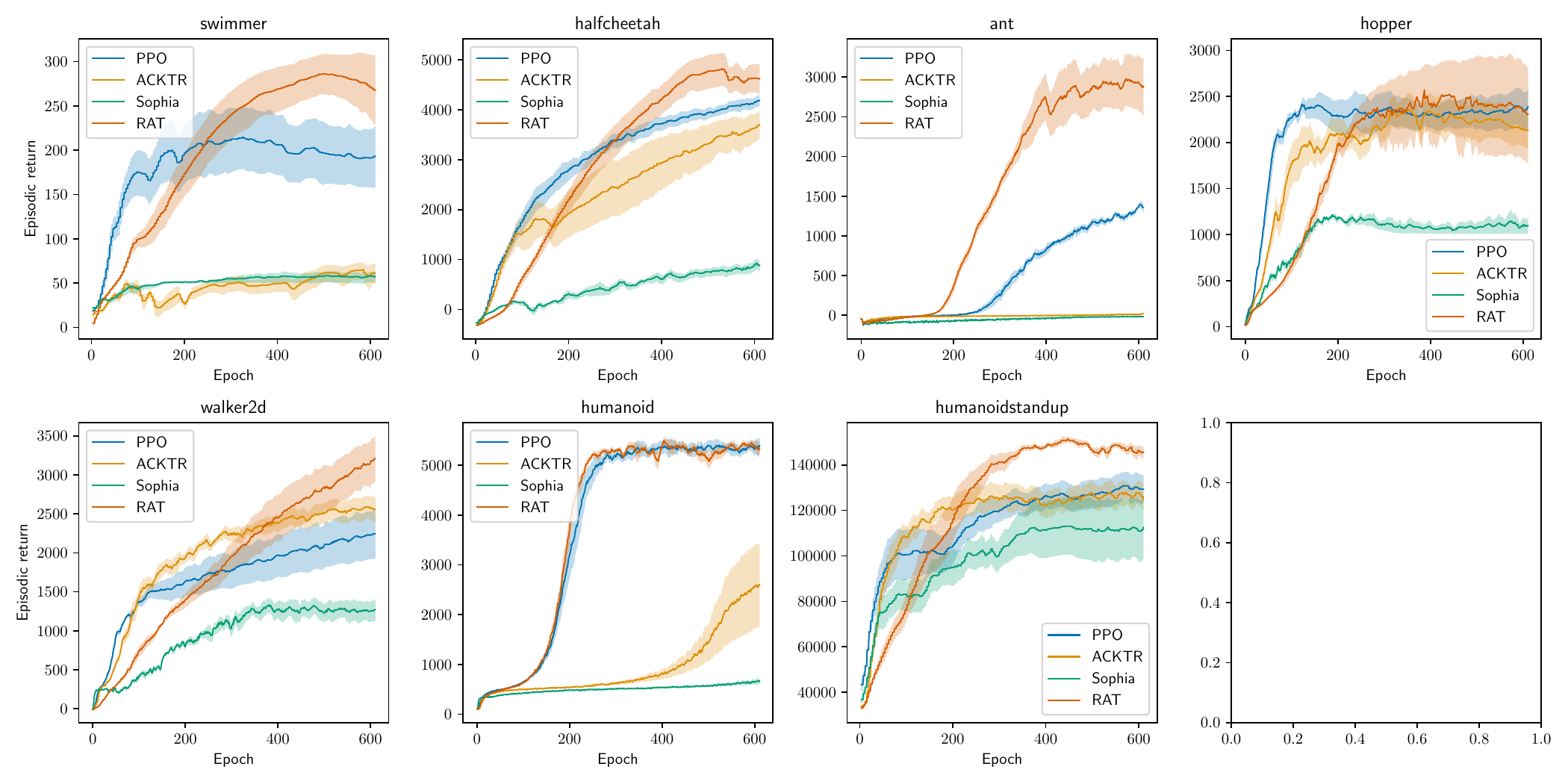}
    \caption{Optimizing MLP Policies on Continuous Control Tasks with Shared Actor-Critic Networks. RAT outperforms KFAC and FVP+CG in most tasks. 
    The shaded region denotes the standard deviation over 5 random seeds.}
    \label{fig:mujoco-shared}
\end{figure}

\begin{figure}[h]
    \centering
    \includegraphics[width=.6\linewidth]{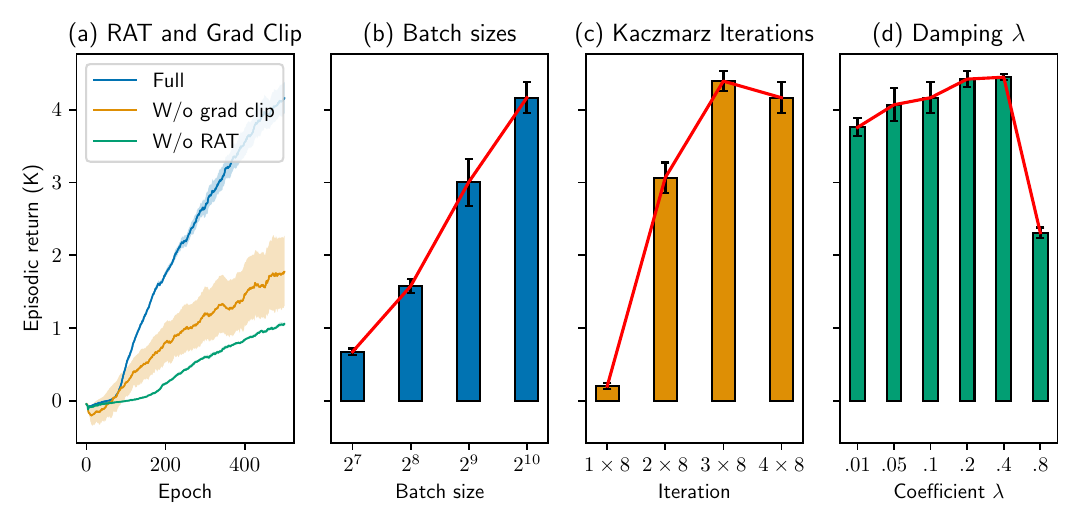}
    \caption{Ablation study and sensitivity analysis of RAT on Ant.}
    \label{fig:ablation-ant}
\end{figure}

\end{document}